\g@addto@macro{\@algocf@init}{\SetKwInOut{Parameter}{Parameters}} 
\newcounter{matriz}
\newcounter{tableeqn}[table]
\renewcommand{\thetableeqn}{\arabic{tableeqn}}
\newcounter{tablesubeqn}[tableeqn]
\newcounter{subeqn} %
\newtheorem{theorem}{Theorem}
\newtheorem{corollary}{Corollary}
\newcounter{daggerfootnote}
\title{Certifiably-correct Control Policies for Safe Learning and Adaptation in Assistive Robotics}
\author{
% temp. 1
Keyvan Majd$^{1\dagger}$, Geoffrey Clark$^{1}$, Tanmay Khandait$^{1}$, Siyu Zhou$^{1}$,\AND Sriram Sankaranarayanan$^{2}$, Georgios Fainekos$^{3}$, Heni Ben Amor$^{1}$ \vspace{8 pt}\\
$^{1}$Arizona State University, $^{2}$University of Colorado Boulder, $^{3}$ Toyota NA-R\&D\vspace{8 pt}\\
$^{\dagger}\texttt{majd@asu.edu}$
}
\begin{document}

\maketitle

\begin{abstract}
% This is a \textbf{test upload} from the organizing committee. 
% No need to review.

  Guaranteeing safety in human-centric applications is critical in robot learning 
  as the learned policies may demonstrate unsafe behaviors in formerly unseen scenarios. 
  We present a framework to locally repair an erroneous policy network 
  to satisfy a set of formal safety constraints using Mixed Integer Quadratic Programming (MIQP). 
  Our MIQP formulation explicitly imposes the safety constraints to the learned policy
  while minimizing the original loss function. 
  The policy network is then verified to be locally safe. 
  We demonstrate the application of our framework to derive safe policies for a robotic lower-leg prosthesis.
\end{abstract}

\begin{figure}[h!]
    \centering    \includegraphics[width=0.98\textwidth]{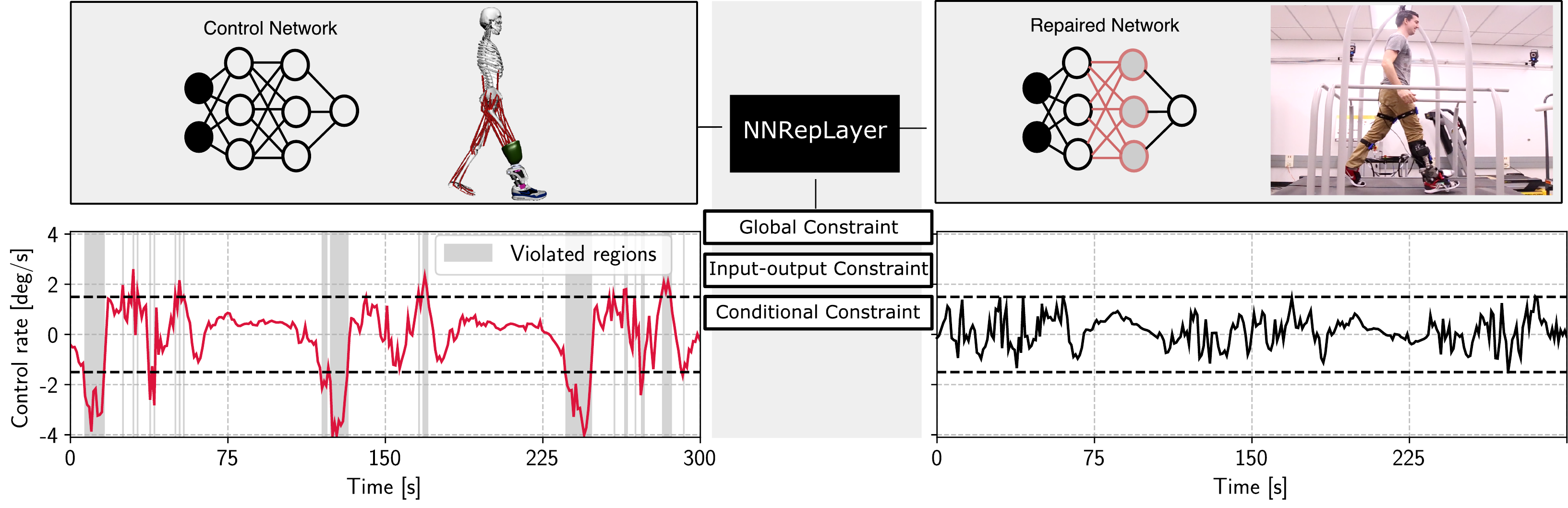}
    \caption{Left: A trained neural-network policy to control a prosthesis violates formal safety constraints. Right: Our framework repairs the violation while maintaining the underlying behavior.}
    \label{fig:repair_intro}
    \vspace{-5pt}
\end{figure}

\section{Introduction}
Deep network policies can help capturing the complicated human-robot interaction dynamics and adapting control parameters automatically to the user's individual characteristics in the assistive robotic devices.
Despite this advanced capability, deep neural network (DNN) policies are not widely used in the field of assistive robotics 
since the trained policies may generate unsafe behaviors when encountering unseen inputs.
% Due to the safety critical nature of the human-assistive control tasks, 
% the trained policies should strictly comply with the safety protocols and constraints [cite]. 
In case of lower-leg prosthesis, for example, control values and joint angles should not exceed certain limits.
%===============================================================================
% lit. review:
One approach to ensure that a NN satisfies a given set of safety properties is to use retraining and fine-tuning  based on counter-examples~\citep{sinitsin2019editable,ren2020few, DongEtAl2021qrs, taormina2020performance, yang2022neural}. 
However, this approach has a number of pitfalls. First of all, in both retraining and fine-tuning the labels of the desired data, i.e., the data that satisfy the constraints, are not known. 
Most critically, retraining and fine-tuning are typically based on gradient descent optimization methods, so they cannot guarantee that the result satisfies the provided constraints.
The methods presented in \citep{FuLi2022iclr} and \citep{sotoudeh2021provable} rely on extending the trained policy architecture or producing decoupled networks, respectively, to modify the output of network only in the faulty linear regions of input space. 
The method proposed in \citep{FuLi2022iclr} is only suitable to the classification tasks as it only satisfies the constraints for the faulty samples and does not consider the minimization of original loss function. 
It is also unclear how the approach scales for high dimensional
inputs, since it requires partitioning the input space into affine subregions.
The decoupling technique proposed in \citep{sotoudeh2021provable} causes the repaired network to be discontinuous,
so it cannot be employed in robot learning and control.
The application of \citep{sotoudeh2021provable} is also limited to the policies with less than three inputs.
\begin{wrapfigure}{r}{0.21\textwidth}
\vspace{-7pt}
    \centering
    \includegraphics[width=0.19\textwidth]{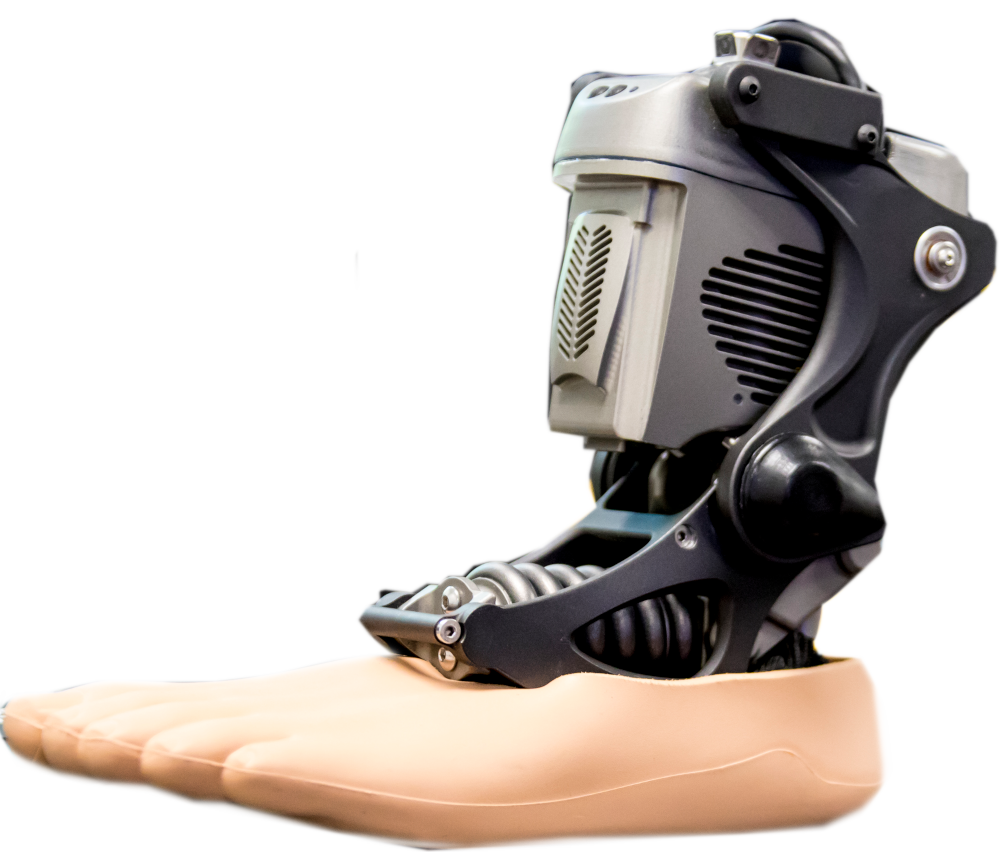}
    \caption{Lower-leg prosthesis }
    \label{fig:prosthesis}
    \vspace{-10pt}
\end{wrapfigure}
Finally, \citep{goldberger2020minimal} only repairs the weights of final layer
which drastically reduces the space of possible successful repairs (and mostly a repair is not
even feasible).

%===============================================================================

We introduce NNRepLayer, a framework to certifiably repair a network policy to satisfy a set of given safety constraints with minimal deviation from the performance of original trained network.
We particularly used our approach to derive controllers for a robotic lower-leg prosthesis that satisfy basic safety conditions, see Fig. \ref{fig:repair_intro}.
Given a set of safety constraints on the output of the trained policy for a set of input samples, 
NNRepLayer formulates a Mixed-integer Quadratic Programming (MIQP) to modify the weights of policy in a layer-wise fashion subject to the desired safety constraints.
Applying NNRepLayer formally guarantees the satisfaction of constraints for the given input samples.
Moreover, we propose an algorithm to employ NNRepLayer and a sound verifier in the loop that improves the adversarial accuracy of the trained policy 
(satisfaction of constraints for the inputs in a norm-bounded distance of the repaired samples).
% Finally, in real-world robot experiments
% we show that the introduced methodology produces safe neural policies for a lower-leg prosthesis
% satisfying a variety of constraints. 
Note that the repair problem is fundamentally different from verification. 
Verification \citep{TjengXT2019iclr, liu2021algorithms} explores the input space of network to obtain tight bounds over the output
while repair optimizes the NN parameters to ensure the satisfaction of constraints by the network’s output.

{\bf Notation.} 
We denote the set of variables $\{a_1,a_2,\cdots,a_N\}$ with $\{a_n\}^N_{n=1}$.
Let $\pi_{\theta}$ be a network policy with $L$ hidden layers. 
The nodes at each layer $l \in \{l\}^L_{l=0}$ are represented by $x^l$, 
where $ \lvert x^l \rvert$ denotes the dimension of layer $l$ ($x^0$ represents the input of network). 
The network's output $\pi_{\theta}(x^0)$ is denoted by $y$.
We consider fully connected policy networks with weight and bias terms $\{(\theta^l_w,\theta^l_b)\}^{L+1}_{l=1}$.
% Each two subsequent layers $l-1$ and $l$ are fully connected with weight and bias terms $\theta^l_w$ and $\theta^l_b$, respectively. 
The training data set of $N$ inputs $x^0_n$ and target outputs $t_n$ is denoted by $\{(x^0_n,t_n)\}^N_{n=1}$ 
sampled from the input-output space
$\mathcal{X}\times\mathcal{T}\subseteq\mathbb{R}^{\lvert x^0\rvert}\times\mathbb{R}^{ \lvert t\rvert}$.
The vector of nodes at layer $l$ for sample $n$ is denoted by $x^l_n$.
% The value of each hidden node is calculated using the weighted sum of nodes in 
% its previous layer passed through a nonlinear activation function. 
In this work, we focus on the policy networks with the Rectified Linear Unit (ReLU) activation function $R(z) = \max\{0,z\}$. 
Thus, given the $n$\textsuperscript{th} sample, in the $l$\textsuperscript{th} hidden layer, 
we have $x^l = R\left(\theta_w^lx^{l-1}+\theta_b^l\right)$. 
The last layer is also represented as $y = \theta_w^{L+1}x^{L}+\theta_b^{L+1}$.
% Finally, $N\!N^{\theta_w^l ,b^l}$ denotes the change of $l$\textsuperscript{th} layer's weights and bias terms in $N\!N$ by $\theta_w^l$ and $b^l$, respectively.
% \vspace{-1pt}
\section{NNRepLayer}
Without loss of generality, we motivate and discuss our approach using the task of learning safe robot controllers for a lower-leg prosthesis, see Fig. \ref{fig:prosthesis}. The goal is to learn a policy $\pi_\theta$ which generates control values for the ankle angle of the powered prosthesis given a set of sensor values. 
% Most critically, however, policy $\pi_\theta$ is required to satisfy a set of safety constraints $\Psi$. 
% Fig.~\ref{fig:repair_intro} provides an overview of our methodology in addressing this challenge. 
% For now, we assume that an unsafe prior policy network may exist, see Fig.~\ref{fig:repair_intro}~(left). 
% The network parameters $\theta$ may be learned with through imitation learning~\citep{gao2020recurrent}, reinforcement learning~\citep{gao2020} or any of the common machine learning paradigms. 
The given policy may be optimized for task efficiency, e.g., stable and low-effort walking gaits, but may not yet satisfy any safety constraints $\Psi$. 
Our goal is to find an adjusted set of network parameters that satisfy any such constraint. 
% We may now choose, for example, to restrict the control rate to specific bounds. 
% Running the network on the input values of a validation data set reveals that violations occur at a number of time steps, 
% as seen in Fig.~\ref{fig:repair_intro}. 
% Our approach, called \textbf{NNRepLayer} (Layer-wise Neural Network Repair), takes the original network parameters $\theta$ and the predicates $\Psi$ and yields the updated parameters that generate no violations. 
%===============================================================================
% We formulate our framework as the minimization of the loss function $E(\theta_w,\theta_b)$ 
% subject to $(x^0,t)\in\mathcal{X}\times\mathcal{T}$ and $\Psi(y,x^0)$ for the inputs of interest $x^0\in \mathcal{X}_{r}$. 
% However, the resulting optimization problem is non-convex and difficult to solve due to the nonlinear ReLU activation functions and the high-order nonlinear constraints resulted from the multiplication of terms involving the weight/bias variables. 
% In our approach, we obtain a sub-optimal solution by just focusing on repairing a single layer. 
% We therefore modify the weight and bias terms of a single layer to adjust the predictions so as to minimize $E(\cdot)$ and to satisfy $\Psi(\cdot)$. 
% Thus, we solve the following problem,
We formulate our framework as the minimization of the loss function $E(\theta_w,\theta_b)$ 
subject to $(x^0,t)\in\mathcal{X}\times\mathcal{T}$ and $\Psi(y,x^0)$ for the inputs of interest $x^0\in \mathcal{X}_{r}$. 
However, the resulting optimization is non-convex and difficult to solve due to the nonlinear forward pass of ReLU networks.
Hence, we obtain a sub-optimal solution by just modifying the weights and biases of a single layer to adjust the predictions so as to minimize $E(\cdot)$ and to satisfy $\Psi(\cdot)$. 
The problem is defined as follows,

{\bf Problem Statement (Repair Problem).} \textit{Let $\pi_{\theta}$ denote a trained policy with 
$L$ hidden layers over the training input-output space 
$\mathcal{X}\times\mathcal{T}\subseteq\mathbb{R}^{\lvert x^0\rvert}\times\mathbb{R}^{\lvert t\rvert}$ 
and $\Psi(y,x^0)$ denote a predicate representing constraints on the output $y$ of $\pi_{\theta}$
for the set of inputs of interest $x^0\in \mathcal{X}_r\subseteq\mathcal{X}$. 
NNRepLayer modifies the weights of a layer $l\in\{1,\cdots,L+1\}$ in $\pi_{\theta}$ such that the new network $\pi_{\theta_r}$ satisfies $\Psi(y,x^0)$ while minimizing the loss of network $E(\theta_w^l,\theta_b^l)$ with respect to its original training set.}

Since $\mathcal{X}_r$ and $\mathcal{X}$ are not necessarily convex,
we formulate NNRepLayer over a data set
$\{(x^0_n,t_n)\}^N_{n=1}\sim \mathcal{X}\times\mathcal{T} \cup \mathcal{X}_r\times\Tilde{\mathcal{T}}$,
where $\Tilde{{\mathcal{T}}}$ is the set of original target values of inputs in $\mathcal{X}_r$. 
The predicate $\Psi(x^0,y)$ defined over $x^0\in \mathcal{X}_r$ is not necessarily compatible with the target values in $\Tilde{\mathcal{T}}$. 
It means that the predicate may bound the NN output for $\mathcal{X}_r$ input space such 
that not allowing an input $x^0\in \mathcal{X}_r$ to reach its target value in
$\Tilde{\mathcal{T}}$. 
It is a natural constraint in many applications. 
% For instance, due to the safety constraints, 
% we may not allow a NN controller to follow its original control reference for a given unsafe set of input states.
For a given layer $l$, we also define $E(\theta_w^l,\theta_b^l)$ in the form of sum of square loss 
$E(\theta_w^l,\theta_b^l) = \sum^{N}_{n=1}\lVert y_n(x_n^0,\theta_w^l,\theta_b^l)-t_n\rVert^2_2$,
where $\lVert\cdot\rVert_2$ denotes the Euclidean norm. Since we only repair the parameters of target layer $l$, $\{(\theta^i_w,\theta^i_b)\}^{L+1}_{i=l+1}$ are fixed. 
% Here, since we only repair the weight and bias terms of target layer $l$, the loss term $E$ is a function of $\theta_w^l$ and $\theta_b^l$, respectively. 
% Hence, the weight and bias terms of all layers except the target layer $l$ are fixed. 
We define NNRepLayer as follows.

{\bf NNRepLayer Optimization Formulation.} Let $\pi_{\theta}$ be a policy with $L$ hidden layers,
$\Psi(y,x^0)$ be a predicate, 
and $\{(x^0_n,t_n)\}^N_{n=1}$ be an input-output data set sampled from  $(\mathcal{X}\times\mathcal{T}) \cup (\mathcal{X}_r\times\Tilde{\mathcal{T}})$. 
% Also, let  $E(\theta_w^l,\theta_b^l)$ be the loss function defined in (\ref{eq: loss}). 
NNRepLayer minimizes the loss (\ref{eq:cost}) by modifying $\theta^l_w$ and $\theta^l_b$
subject to the constraints (\ref{eq:last_layer})-(\ref{eq:w_bound}).
\begin{figure*}[t]
\begin{minipage}{\dimexpr.5\textwidth-.5\columnsep}
\setlength{\tabcolsep}{7pt} % Default value: 6pt
\renewcommand{\arraystretch}{1.36} % Default value: 1
\aboverulesep=0ex % Solution part 1 of 3
\belowrulesep=0ex % Solution part 1 of 3
\setlength{\tabcolsep}{0.5em}
\begin{tabular}{|lll|}
\toprule
\refstepcounter{tableeqn} (\thetableeqn)
\label{eq:cost}
&
\multicolumn{2}{l|}{$\underset{\substack{\theta_w^l,\theta_b^l,\delta,y_n,\{x_n^i\}_{i=l}^{L},\{\phi_n^i\}_{i=l}^{L}} } {\text{min}}\!\!\!\!E(\theta_w^l,\theta_b^l)+\delta$,}
\\
&
\multicolumn{2}{l|}{s.t.}
\\

\refstepcounter{tableeqn} (\thetableeqn) \label{eq:last_layer}
&
$y_n=\theta_w^{L+1}x^{L}_n+\theta_b^{L+1}$,
 
& 
\\
\refstepcounter{tableeqn} (\thetableeqn) \label{eq:mid_layer}
&
$x^i_n=R(\theta_w^ix^{i-1}_n+\theta_b^i)$,

& 
\text{for } $\{i\}_{i=l}^{L}$
\\
\refstepcounter{tableeqn} (\thetableeqn) \label{eq:predicates}
&
$\Psi(y_n,x^0_n)$,
& \text{for }$x^0_n\in \mathcal{X}_r$
\\
\refstepcounter{tableeqn} (\thetableeqn)\label{eq:w_bound}
&
\multicolumn{2}{l|}{$\delta \geq \lVert \theta_w^l-\theta_w^{l,init}\rVert_{\infty},~\lVert \theta_b^l-\theta_b^{l,init}\rVert_{\infty}$.}  
 \\\bottomrule
\end{tabular}
\end{minipage}\hfill
\begin{minipage}{\dimexpr.5\textwidth-.5\columnsep}
\RestyleAlgo{boxruled}
\setlength{\algomargin}{1.4em}
% \vspace{-12pt}
    \begin{algorithm}[H]
    \DontPrintSemicolon
    % \dontprintsemicolon
        \caption{NNRepLayer \& Verifier }
        \label{alg:verifier}
            \KwIn{$\pi^{o}_{\theta}, \mathcal{X}_{r}, \Psi$}
            \KwOut{ $\pi^{r}_{\theta}$}
            $\pi^{r}_{\theta}\leftarrow \pi^{o}_{\theta}$\;
            \While{$\mathcal{X}_{r}\notin \emptyset$}{
            $\pi^{r}_{\theta}\leftarrow \textsc{NNRepLayer}(\pi^{r}_{\theta}, \mathcal{X}_{r}, \Psi)$\;
            $\mathcal{X}_{r}\leftarrow \textsc{Verifier}(\pi^{r}_{\theta})$\;}
    \end{algorithm}
\end{minipage}
% \vspace{-10pt}
\end{figure*}

Here, constraints (\ref{eq:last_layer}) and (\ref{eq:mid_layer}) represent the linear forward pass of network's last layer and hidden layers starting from the layer $l$, respectively. 
Except the weight and bias terms of the $l$\textsuperscript{th} layer, i.e. $\theta^l_w$ and $\theta^l_b$, 
the weight and bias terms of the subsequent layers $\{(\theta^i_w,\theta^i_b)\}^{L+1}_{i=l+1}$ are fixed.
The sample values of $x_n^{l-1}$ are obtained by the weighted sum of the nodes in its previous layers starting from $x_n^0$ for all $N$ samples $\{n\}^N_{n=1}$.
Each ReLU node $x^l$ is formulated using Big-M formulation
\citep{belotti2011disjunctive,tsay2021partition} by $x^l\geq \theta_w^{l}x^{l-1}_n+\theta_b^{l}$, $x^l\leq
\big(\theta_w^{l}x^{l-1}_n+\theta_b^{l}\big)-lb(1-\phi)$, and $x^l\leq ub~\phi$,
where $x^l \in [0,\infty)$, and $\phi\in\{0,1\}$ determines the activation status of node $x^l$. 
The bounds $lb, ub\in \mathbb{R}$ are known as Big-M coefficients, $\theta_w^{l}x^{l-1}_n+\theta_b^{l}\in[lb,ub]$, 
that need to be as tight as possible to improve the performance of MIQP solver. 
We used Interval Arithmetic (IA) Method \citep{moore2009introduction, TjengXT2019iclr}
to obtain tight bounds for ReLU nodes (read Appx. \ref{supp: IA} for further details on IA). 
% The ReLU activation functions $R(z)=\max\{0,z\}$ are modeled with a disjunctive inequality constraint $[\phi = 0, x\leq 0 ]\vee[\phi = 1, s\leq 0]$. 
% Assume $\phi$ is a Boolean variable and $z = x-s$, where $x,s\geq 0$. 
% We have $x=z$ when $\phi=1$ ($z\geq 0$) and $x=0$ when $\phi=0$ ($z=-s< 0$). Here, $x$ determines the output of ReLU activation function $R(z)$,
% the disjunctive inequalities are then relaxed as mixed integer algebraic equations by the Big-M method \citep{belotti2011disjunctive}. We used Interval Arithmetic (IA) Method \citep{moore2009introduction, TjengXT2019iclr} to obtain tight bounds for ReLU nodes (read the supplementary materials, Sec. \ref{supp: IA}, for further details on IA). 
% \begin{align}\refstepcounter{tableeqn}\label{eq: disjunction}\tag{\thetableeqn}
%     \left[\begin{array}{c}
%          \phi = 0\\
%           x\leq 0
%     \end{array}\right]\bigvee\left[\begin{array}{c}
%          \phi = 1\\
%           s\leq 0
%     \end{array}\right]. 
% \end{align}
Constraint (\ref{eq:predicates}) is a given predicate on $y$ defined over
$x^0\in\mathcal{X}_r$. NNReplayer addresses the predicates of the form $\bigvee_{c=1}^{C}\psi_c(x^0,y)$ where $C$ represents the number of disjunctive propositions and $\psi_i$ is an affine function of $x^0$ and $y$.
Finally, constraint (\ref{eq:w_bound}) bounds the entry-wise max-norm error between the weight and bias terms $\theta^l_w$ and $\theta^l_b$, 
and the original $\theta_w^{l,init}$ and $\theta_b^{l,init}$ by $\delta$.
%  By adding $\delta$ to (\ref{eq:cost}), we aim to minimize weights deviation as well as $E(\theta_w^l,\theta_b^l)$ in the repair process. 
% We also assume that the predicate $\Psi(x^0,y)$ is represented as a linear constraint.
Considering the quadratic loss function $E(\cdot)$ and the affine disjunctive forms of $\Psi(\cdot)$ and $R(\cdot)$, we solve NNRepLayer as a Mixed Integer Quadratic Program (MIQP).
Any feasible solution to the NNRepLayer guarantees that for all input samples $x^0_n$ from $\{(x^0_n,t_n)\}^N_{n=1}$, 
$\Psi(\pi_{\theta_r}(x^0_n),x^0_n)$ is satisfied (for theorems and proofs read Appx. \ref{supp: th}). 

Our technique only ensures the satisfaction of constraints for the repaired samples, so the satisfaction of constraints for the unseen adversarial samples is not theoretically guaranteed. 
To address this problem, we propose Alg. \ref{alg:verifier} that guarantees the satisfaction of constraints $\Psi$ in $\mathcal{X}_r$. 
In this algorithm, NNReplayer is employed with a sound verifier \citep{huang2017safety,katz2017reluplex} in the loop such that our method first returns the repaired network $\pi^{r}_{\theta}$. Then, the verifier evaluates the network. 
If the algorithm terminates, the network is guaranteed to be safe for all other unseen samples in the target input space. Otherwise, the network is not satisfied to be safe and the verifier provides the newly found adversarial samples $\mathcal{X}_r$ for which the guarantees do no hold. In turn, NNRepLayer uses the given samples by the verifier to repair the network. This loop terminates when the verifier confirms the satisfaction of constraints. 

\section{Evaluation}
\begin{figure*}[t]

    \centering
    \includegraphics[width=0.95\textwidth]{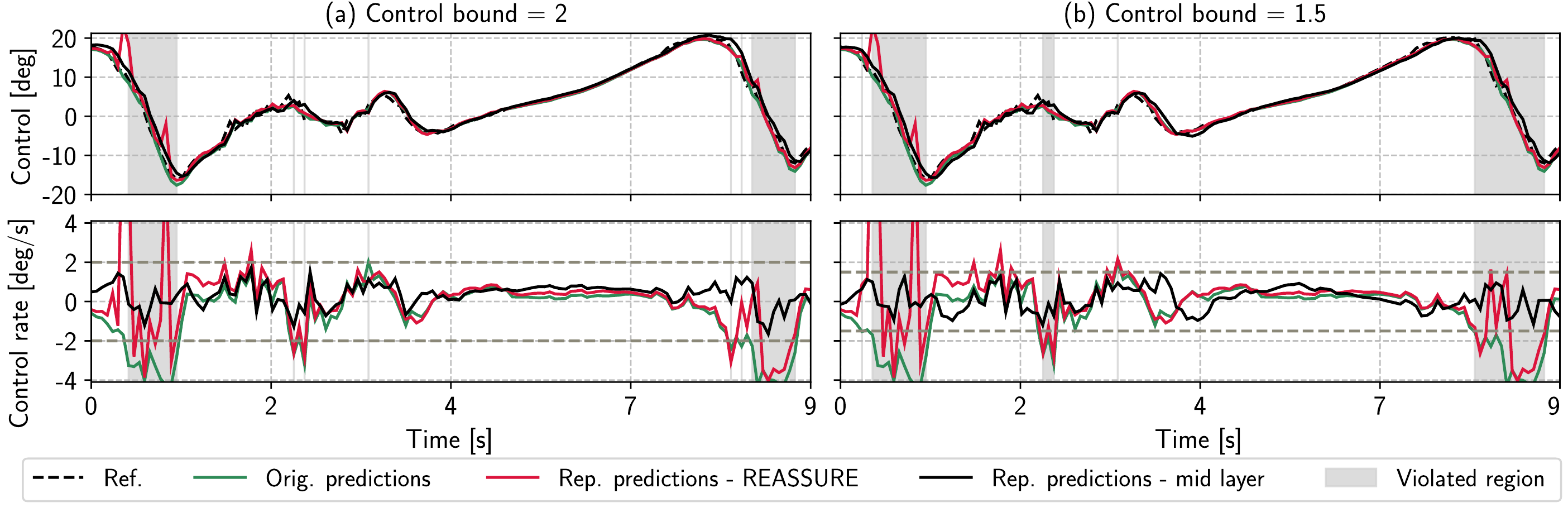}

    \caption{Bounding Ankle angles and Ankle angle rates for bounds (a) $\Delta \alpha_a = 2$ and (b) $\Delta \alpha_a = 1.5$.}
    \label{fig:dynamic_const_xxx}
    \vspace{-5pt}
\end{figure*}
Since DNN may change the control more rapidly than what is feasible for the robotic prosthesis or for the human subject to accommodate, we propose an \textbf{Input-output constraint} over the possible change of control actions from one time-step to the next.
This constraint should act to both smooth the control action in the presence of sensor noise, as well as to reduce hard peaks and oscillations in the control action.
To capture this constraint as an input-output relationship, we trained a three-hidden-layer policy network with $32$ ReLU nodes at each hidden layer.
The network receives the previous $dt$ control actions $\{\alpha_a(i)\}^{t-1}_{i=t-dt}$, and the angle and velocity from the upper and lower limb sensors,
$\alpha_{ul},\dot{\alpha}_{ul}, \alpha_{ll}, \dot{\alpha}_{ll}$ (network inputs $x^0$), respectively. The network then predicts the ankle angle $\alpha_a$ (network output $y$). Using NNRepLayer, we then bound the control rate by applying the constraint 
$\Delta\alpha_a$ by $\lvert \alpha_a(t) - \alpha_a(t-1)\rvert \leq \Delta \alpha^{max}_a$ 
in network repair (repairing a middle layer). 
\begin{wraptable}{r}{7cm}
\vspace{-4pt}
\centering
\caption{Repair Stats*. 
RT: runtime, 
MAE: Mean Absolute Error between the repaired and the original outputs, 
RE: the percentage of adversarial samples that are repaired, 
and 
IB: the percentage of test samples that were originally safe but became faulty after the repair. 
% The metrics are the average of 50 runs. 
% Note that NNRepLayer and REASSURE \citep{FuLi2022iclr} return the same outcomes for the fixed repair samples at each run, 
% so the reported metrics are the average of repairing 50 different trained networks.
}
\label{tab:my-table}

\begin{adjustbox}{width=.5\textwidth}
\setlength{\aboverulesep}{0pt}
\setlength{\belowrulesep}{0pt}
\setlength{\extrarowheight}{.75ex}
% \vspace{2cm}

\begin{tabular}{ccccc}

\toprule
 &RT [s]& MAE & RE [\%] & IB [\%] \\ \midrule
{\cellcolor[HTML]{EFEFEF} NNRepLayer}  
        & \cellcolor[HTML]{EFEFEF} $112\pm122$ & \cellcolor[HTML]{EFEFEF} $0.5\pm 0.03$ & \cellcolor[HTML]{C0C0C0} $98\pm 1$ & \cellcolor[HTML]{C0C0C0} $0.19 \pm 0.18$
        \\
{REASSURE~\citep{FuLi2022iclr}}  
        & $30 \pm 8$ & $0.6\pm 0.03$ & $19\pm 4$ & $85\pm 5$
        \\
{Fine-tune}   
        & $8\pm 2$ & $0.6\pm 0.03$ & $88\pm 2$ & $2.47 \pm 0.49$
        \\ 
{Retrain}  
        & $101 \pm 1$ & $0.5\pm 0.04$ & $98\pm 1$ & $0.28 \pm 0.32$
        \\ 
        \bottomrule 
        \multicolumn{5}{l}{\small *Average of 50 runs.}\\
\end{tabular}%
\end{adjustbox}
\vspace{-5pt}
\end{wraptable}
In our tests, we bounded the control rate by $\Delta \alpha^{max}_a = 1.5$ [deg/s] and $\Delta \alpha^{max}_a = 2$ [deg/s]. 
Our simulation results in Fig. \ref{fig:dynamic_const_xxx} demonstrate that NNRepLayer satisfies both bounds on the control rate which subsequently results in a smoother control output. 
It can also be observed that NNRepLayer successfully preserves the tracking performance of controller. 
 Table \ref{tab:my-table} compares our method with fine-tuning, retraining \citep{sinitsin2019editable,ren2020few,DongEtAl2021qrs,taormina2020performance}, and REASSURE \citep{FuLi2022iclr}. 
As shown in Table \ref{tab:my-table}, retraining and NNRepLayer both perform well in maintaining the 
minimum absolute error and the generalization of constraint satisfaction to the testing samples. 
Comparing to \citep{FuLi2022iclr}, while REASSURE guarantees the satisfaction of constraints in the local faulty linear regions, 
we showed that this method significantly reduces the performance of network in the repaired regions, see Fig.\ref{fig:dynamic_const_xxx}. 
Figure \ref{fig:dynamic_const_xxx} also shows that \citep{FuLi2022iclr} cannot address the given constraints for the faulty samples (introduces almost 500 times more faulty samples compared to our technique).  
Moreover, we tested Alg. \ref{alg:verifier} on a global constraint that ensures $\alpha_{a}$ 
stays within a certain range. 
We used the sound verifier proposed in \citep{TjengXT2019iclr} for the verification. 
To evaluate the algorithm, we used the adversarial accuracy metric ($ACC_{\epsilon}$). 
For a given adversarial sample set $X_{adv}\subseteq X_r$, 
this metric evaluates the portion of samples that are robust to perturbations in the $l_{\infty}$ ball of radius $\epsilon$ around each sample. Our method results $ACC_{0.5} = 100\%$ after one round of repair, $ACC_{1}=88\%$ after the second iteration, and $ACC_{1}=100\%$ after the third iteration.
Finally, the largest network that we successfully repaired had 256 neurons in each hidden layer that took up to 10 hours.  
Similar network structure and sizes are frequently used in robotics and control tasks for example researchers in Google Brain trained a robot locomotion task using a network with 2 hidden layers and 256 nodes \citep{haarnoja2019learning}. 
For details on the experimental setup and other results, read Appx. \ref{supp:result}.

\section{Conclusion}
\label{sec:conclusion}
We introduced a framework for training neural network controllers that certifiably satisfy a formal set of safety constraints. 
Our approach, NNRepLayer, performs a global optimization step in order to perform layer-wise repair of neural network weights tested on a lower-leg prosthesis satisfying a variety of constraints. 
We argue that this type of approach is critical for human-centric and safety-critical applications of robot learning, e.g., the next-generation of assistive robotics.

\section*{Acknowledgment}
This work was partially supported by the National Science Foundation under grants CNS-1932068, IIS-1749783, and CNS-1932189.

\bibliographystyle{unsrtnat}
\bibliography{references}

\begin{thebibliography}{29}
\providecommand{\natexlab}[1]{#1}
\providecommand{\url}[1]{\texttt{#1}}
\expandafter\ifx\csname urlstyle\endcsname\relax
  \providecommand{\doi}[1]{doi: #1}\else
  \providecommand{\doi}{doi: \begingroup \urlstyle{rm}\Url}\fi

\bibitem[Sinitsin et~al.(2019)Sinitsin, Plokhotnyuk, Pyrkin, Popov, and
  Babenko]{sinitsin2019editable}
Anton Sinitsin, Vsevolod Plokhotnyuk, Dmitry Pyrkin, Sergei Popov, and Artem
  Babenko.
\newblock Editable neural networks.
\newblock In \emph{International Conference on Learning Representations}, 2019.

\bibitem[Ren et~al.(2020)Ren, Yu, Qi, Juefei-Xu, Li, Xue, Ma, and
  Zhao]{ren2020few}
Xuhong Ren, Bing Yu, Hua Qi, Felix Juefei-Xu, Zhuo Li, Wanli Xue, Lei Ma, and
  Jianjun Zhao.
\newblock Few-shot guided mix for dnn repairing.
\newblock In \emph{2020 IEEE International Conference on Software Maintenance
  and Evolution (ICSME)}, pages 717--721. IEEE, 2020.

\bibitem[Dong et~al.(2021)Dong, Sun, Wang, Wang, and Dai]{DongEtAl2021qrs}
Guoliang Dong, Jun Sun, Xingen Wang, Xinyu Wang, and Ting Dai.
\newblock Towards repairing neural networks correctly.
\newblock In \emph{IEEE 21st International Conference on Software Quality,
  Reliability and Security (QRS)}, pages 714--725, 2021.

\bibitem[Taormina et~al.(2020)Taormina, Cascio, Abbene, and
  Raso]{taormina2020performance}
Vincenzo Taormina, Donato Cascio, Leonardo Abbene, and Giuseppe Raso.
\newblock Performance of fine-tuning convolutional neural networks for hep-2
  image classification.
\newblock \emph{Applied Sciences}, 10\penalty0 (19):\penalty0 6940, 2020.

\bibitem[Yang et~al.(2022)Yang, Yamaguchi, Tran, Hoxha, Johnson, and
  Prokhorov]{yang2022neural}
Xiaodong Yang, Tom Yamaguchi, Hoang-Dung Tran, Bardh Hoxha, Taylor~T Johnson,
  and Danil Prokhorov.
\newblock Neural network repair with reachability analysis.
\newblock In \emph{International Conference on Formal Modeling and Analysis of
  Timed Systems}, pages 221--236. Springer, 2022.

\bibitem[Fu and Li(2021)]{FuLi2022iclr}
Feisi Fu and Wenchao Li.
\newblock Sound and complete neural network repair with minimality and locality
  guarantees.
\newblock In \emph{10th International Conference on Learning Representations
  ({ICLR})}, 2021.

\bibitem[Sotoudeh and Thakur(2021)]{sotoudeh2021provable}
Matthew Sotoudeh and Aditya~V Thakur.
\newblock Provable repair of deep neural networks.
\newblock In \emph{42nd ACM SIGPLAN International Conference on Programming
  Language Design and Implementation}, pages 588--603, 2021.

\bibitem[Goldberger et~al.(2020)Goldberger, Katz, Adi, and
  Keshet]{goldberger2020minimal}
Ben Goldberger, Guy Katz, Yossi Adi, and Joseph Keshet.
\newblock Minimal modifications of deep neural networks using verification.
\newblock In \emph{23rd International Conference on Logic for Programming,
  Artificial Intelligence and Reasoning}, volume~73, pages 260--278, 2020.

\bibitem[Tjeng et~al.(2019)Tjeng, Xiao, and Tedrake]{TjengXT2019iclr}
Vincent Tjeng, Kai~Yuanqing Xiao, and Russ Tedrake.
\newblock Evaluating robustness of neural networks with mixed integer
  programming.
\newblock In \emph{7th International Conference on Learning Representations
  {(ICLR)}}, 2019.

\bibitem[Liu et~al.(2021)Liu, Arnon, Lazarus, Strong, Barrett, Kochenderfer,
  et~al.]{liu2021algorithms}
Changliu Liu, Tomer Arnon, Christopher Lazarus, Christopher Strong, Clark
  Barrett, Mykel~J Kochenderfer, et~al.
\newblock Algorithms for verifying deep neural networks.
\newblock \emph{Foundations and Trends{\textregistered} in Optimization},
  4\penalty0 (3-4):\penalty0 244--404, 2021.

\bibitem[Belotti et~al.(2011)Belotti, Liberti, Lodi, Nannicini, Tramontani,
  et~al.]{belotti2011disjunctive}
Pietro Belotti, Leo Liberti, Andrea Lodi, Giacomo Nannicini, Andrea Tramontani,
  et~al.
\newblock Disjunctive inequalities: applications and extensions.
\newblock \emph{Wiley Encyclopedia of Operations Research and Management
  Science}, 2:\penalty0 1441--1450, 2011.

\bibitem[Tsay et~al.(2021)Tsay, Kronqvist, Thebelt, and
  Misener]{tsay2021partition}
Calvin Tsay, Jan Kronqvist, Alexander Thebelt, and Ruth Misener.
\newblock Partition-based formulations for mixed-integer optimization of
  trained relu neural networks.
\newblock \emph{Advances in Neural Information Processing Systems},
  34:\penalty0 3068--3080, 2021.

\bibitem[Moore et~al.(2009)Moore, Kearfott, and Cloud]{moore2009introduction}
Ramon~E Moore, R~Baker Kearfott, and Michael~J Cloud.
\newblock Introduction to interval analysis/ramon e.
\newblock \emph{Moore, R. Baker Kearfott, Michael J. Cloud. Philadelphia},
  2009.

\bibitem[Huang et~al.(2017)Huang, Kwiatkowska, Wang, and Wu]{huang2017safety}
Xiaowei Huang, Marta Kwiatkowska, Sen Wang, and Min Wu.
\newblock Safety verification of deep neural networks.
\newblock In \emph{International conference on computer aided verification},
  pages 3--29. Springer, 2017.

\bibitem[Katz et~al.(2017)Katz, Barrett, Dill, Julian, and
  Kochenderfer]{katz2017reluplex}
Guy Katz, Clark Barrett, David~L Dill, Kyle Julian, and Mykel~J Kochenderfer.
\newblock Reluplex: An efficient smt solver for verifying deep neural networks.
\newblock In \emph{International conference on computer aided verification},
  pages 97--117. Springer, 2017.

\bibitem[Haarnoja et~al.(2019)Haarnoja, Ha, Zhou, Tan, Tucker, and
  Levine]{haarnoja2019learning}
Tuomas Haarnoja, Sehoon Ha, Aurick Zhou, Jie Tan, George Tucker, and Sergey
  Levine.
\newblock Learning to walk via deep reinforcement learning.
\newblock In \emph{Robotics: Science and Systems}, 2019.

\bibitem[Schaal(1999)]{schaal1999imitation}
Stefan Schaal.
\newblock Is imitation learning the route to humanoid robots?
\newblock \emph{Trends in cognitive sciences}, 3\penalty0 (6):\penalty0
  233--242, 1999.

\bibitem[Cortino et~al.(2022)Cortino, Bol{\'\i}var-Nieto, Best, and
  Gregg]{cortino2022stair}
Ross~J Cortino, Edgar Bol{\'\i}var-Nieto, T~Kevin Best, and Robert~D Gregg.
\newblock Stair ascent phase-variable control of a powered knee-ankle
  prosthesis.
\newblock In \emph{2022 International Conference on Robotics and Automation
  (ICRA)}, pages 5673--5678. IEEE, 2022.

\bibitem[Gao et~al.(2020)Gao, Gehlhar, Ames, Liu, and
  Delbruck]{gao2020recurrent}
Chang Gao, Rachel Gehlhar, Aaron~D Ames, Shih-Chii Liu, and Tobi Delbruck.
\newblock Recurrent neural network control of a hybrid dynamical transfemoral
  prosthesis with edgedrnn accelerator.
\newblock In \emph{2020 IEEE International Conference on Robotics and
  Automation (ICRA)}, pages 5460--5466. IEEE, 2020.

\bibitem[Morgenroth et~al.(2012)Morgenroth, Gellhorn, and
  Suri]{morgenroth2012osteoarthritis}
David~C Morgenroth, Alfred~C Gellhorn, and Pradeep Suri.
\newblock Osteoarthritis in the disabled population: a mechanical perspective.
\newblock \emph{PM\&R}, 4\penalty0 (5):\penalty0 S20--S27, 2012.

\bibitem[Ekizos et~al.(2018)Ekizos, Santuz, Schroll, and
  Arampatzis]{ekizos2018}
Antonis Ekizos, Alessandro Santuz, Arno Schroll, and Adamantios Arampatzis.
\newblock The maximum lyapunov exponent during walking and running: Reliability
  assessment of different marker-sets.
\newblock \emph{Frontiers in Physiology}, 9:\penalty0 1101, 2018.
\newblock ISSN 1664-042X.

\bibitem[{Gurobi Optimization, LLC}(2021)]{gurobi}
{Gurobi Optimization, LLC}.
\newblock {Gurobi Optimizer Reference Manual}, 2021.
\newblock URL \url{https://www.gurobi.com}.

\bibitem[Fernandez et~al.(2020)Fernandez, Togashi, Hong, and
  Yang]{fernandez2020deep}
Gabriel~I Fernandez, Colin Togashi, Dennis~W Hong, and Lin~F Yang.
\newblock Deep reinforcement learning with linear quadratic regulator regions.
\newblock \emph{arXiv preprint arXiv:2002.09820}, 2020.

\bibitem[Landgraf et~al.(2021)Landgraf, Meese, Pabst, Martius, and
  Huber]{landgraf2021reinforcement}
Christian Landgraf, Bernd Meese, Michael Pabst, Georg Martius, and Marco~F
  Huber.
\newblock A reinforcement learning approach to view planning for automated
  inspection tasks.
\newblock \emph{Sensors}, 21\penalty0 (6):\penalty0 2030, 2021.

\bibitem[Pinosky et~al.(2022)Pinosky, Abraham, Broad, Argall, and
  Murphey]{pinosky2022hybrid}
Allison Pinosky, Ian Abraham, Alexander Broad, Brenna Argall, and Todd~D
  Murphey.
\newblock Hybrid control for combining model-based and model-free reinforcement
  learning.
\newblock \emph{The International Journal of Robotics Research}, page
  02783649221083331, 2022.

\bibitem[Zimmer et~al.(2018)Zimmer, Boniface, and
  Dutech]{zimmer2018developmental}
Matthieu Zimmer, Yann Boniface, and Alain Dutech.
\newblock Developmental reinforcement learning through sensorimotor space
  enlargement.
\newblock In \emph{2018 Joint IEEE 8th International Conference on Development
  and Learning and Epigenetic Robotics (ICDL-EpiRob)}, pages 33--38. IEEE,
  2018.

\bibitem[Kristoffersen et~al.(2021)Kristoffersen, Franzke, Bongers, Wand,
  Murgia, and van~der Sluis]{kristoffersen2021user}
Morten~B Kristoffersen, Andreas~W Franzke, Raoul~M Bongers, Michael Wand,
  Alessio Murgia, and Corry~K van~der Sluis.
\newblock User training for machine learning controlled upper limb prostheses:
  a serious game approach.
\newblock \emph{Journal of NeuroEngineering and Rehabilitation}, 18\penalty0
  (1):\penalty0 1--15, 2021.

\bibitem[Hassibi et~al.(1993)Hassibi, Stork, and Wolff]{hassibi1993optimal}
Babak Hassibi, David Stork, and Gregory Wolff.
\newblock Optimal brain surgeon: Extensions and performance comparisons.
\newblock \emph{Advances in neural information processing systems}, 6, 1993.

\bibitem[LeCun et~al.(1989)LeCun, Denker, and Solla]{lecun1989optimal}
Yann LeCun, John Denker, and Sara Solla.
\newblock Optimal brain damage.
\newblock \emph{Advances in neural information processing systems}, 2, 1989.

\end{thebibliography}

\newpage
% \section*{Appendix}
\appendix
\section{Interval Arithmetic Method}
\label{supp: IA}
To illustrate how we generated a tight valid bound for each ReLU activation node, we used the Interval Arithmetic method \citep{moore2009introduction, TjengXT2019iclr}. 
Interval arithmetic is widely used in verification to find an upper and a lower bounds over the relaxed ReLU activations given a bounded set of inputs. 
We used the same approach to find the tight bounds over the ReLU nodes assuming the weights can only perturb inside a bounded $l_{\infty}$ error with respect to the original weights. 
Assume we denote each input variable of repair layer $L$ as $x^{L-1}(i)$, the weight term that connect $x^{L-1}(i)$ to $x^{L}(j)$ as $\theta^L_w(ij)$, and the bias term of nodes $x^L(j)$ as $\theta^L_b(j)$. 
Given the bounds for variables $\theta^L_w(ij)\in \big[\underline{\theta}^l_w(ij), \bar{\theta}^L_w(ij)\big]$ 
and $\theta^L_b(ij)\in \big[\underline{\theta}^L_b(ij), \bar{\theta}^L_b(ij)\big]$, the interval arithmetic gives the valid upper and lower bounds for $x^{L}(j)$ as
\begin{align*}
&\bar{x}^{L}(j) = \sum_{i} \Big(\bar{\theta}^L_w(ij)\max(0,x^{L-1}(i)) + \underline{\theta}^L_w(ij)\min(0,x^{L-1}(i))\Big) + \bar{\theta}^L_b(ij),\text{ and}\\
&\underline{x}^{L}(j) = \sum_{i} \Big(\underline{\theta}^L_w(ij)\max(0,x^{L-1}(i)) + \bar{\theta}^L_w(ij)\min(0,x^{L-1}(i))\Big) + \underline{\theta}^L_b(ij),
\end{align*}
respectively. The bounds over the ReLU nodes in the subsequent layers $l = L+1,\cdots N$ are obtained as 
\begin{align*}
&\bar{x}^{l}(j) = \sum_{i} \Big(\bar{x}^{l-1}\max(0,\theta^l_w(ij)) + \underline{x}^{l-1}\min(0,\theta^l_w(ij))\Big) + \theta^l_b(ij),\\
&\underline{x}^{l}(j) = \sum_{i} \Big(\underline{x}^{l-1}\max(0,\theta^l_w(ij)) + \bar{x}^{l-1}\min(0,\theta^l_w(ij))\Big) + \theta^l_b(ij).
\end{align*}

% \newpage
\section{Theorems and Proofs}
\label{supp: th}
\begin{theorem}[Soundness of NNRepLayer]
\label{theorem: feasibility}
Given the predicate $\Psi(y,x^0)$, 
and the input-output data set $\{(x^0_n,t_n)\}^N_{n=1}$ 
sampled from  $(\mathcal{X}\times\mathcal{T}) \cup (\mathcal{X}_r\times\Tilde{\mathcal{T}})$ 
over the sets $\mathcal{X}$, $\mathcal{X}_r$, $\mathcal{T}$, 
and $\Tilde{\mathcal{T}}$, 
assume that $\theta_w^{l}$ and $\theta_b^{l}$ are feasible solutions to (\ref{eq:cost})-(\ref{eq:w_bound}).
Then, $\Psi(\pi_{\theta_r}(x^0_n),x^0_n)$ is satisfied
for all input samples $x^0_n$.
% The repaired network $\pi_{\theta}^{\theta_w^{l} ,\theta_b^{l}}$ is guaranteed to satisfy 
% $\Psi(y,x^0)$ for the input-output data set $\{(x^0_n,t_n)\}^N_{n=1}$.
\end{theorem}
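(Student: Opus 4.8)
The statement is at bottom a bookkeeping claim: any feasible point of the MIQP must reconstruct, in its auxiliary variables $\{x_n^i\}_{i=l}^{L}$ and $y_n$, \emph{exactly} the forward pass of the repaired network $\pi_{\theta_r}$ on input $x_n^0$, after which constraint~(\ref{eq:predicates}) literally asserts the desired predicate. So the plan is: (i) identify the pre-$l$ activations as the honest ones; (ii) show the Big-M block at layer $l$ encodes ReLU exactly; (iii) chain through layers $l+1,\dots,L+1$ to obtain $y_n=\pi_{\theta_r}(x_n^0)$; (iv) invoke~(\ref{eq:predicates}). For (i), fix a feasible solution $(\theta_w^l,\theta_b^l,\delta,\{y_n\},\{x_n^i\},\{\phi_n^i\})$. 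Since $\pi_{\theta_r}$ differs from $\pi_\theta$ only in layer $l$, the weights of layers $1,\dots,l-1$ are fixed constants in the program, and hence the quantities $x_n^0,\dots,x_n^{l-1}$ feeding constraint~(\ref{eq:mid_layer}) at $i=l$ are, by definition of the forward pass, precisely the layer-$(l-1)$ activations of $\pi_{\theta_r}$; no optimization is involved here.

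The crux is step (ii): I would first prove the small exactness lemma for the Big-M ReLU gadget. For any $z$ with $lb\le z\le ub$, the constraints $x\ge z$, $x\le z-lb(1-\phi)$, $x\le ub\,\phi$, $x\ge 0$, $\phi\in\{0,1\}$ force $x=\max\{0,z\}$: if $\phi=0$ then $x\le 0$ together with $x\ge 0$ gives $x=0$, and $x\ge z$ forces $z\le 0$, so $x=\max\{0,z\}$; if $\phi=1$ then $x\le z$ together with $x\ge z$ gives $x=z$, and $x\ge 0$ forces $z\ge 0$, so again $x=\max\{0,z\}$. (Conversely $\max\{0,z\}$ is feasible for at least one value of $\phi$ exactly because $z\in[lb,ub]$.) Applying this with $z=\theta_w^l x_n^{l-1}+\theta_b^l$ — the containment $z\in[lb,ub]$ being supplied by the Interval-Arithmetic bounds of Appendix~\ref{supp: IA}, which are constructed to account for the $\ell_\infty$ box~(\ref{eq:w_bound}) on the repaired weights — yields $x_n^l=R(\theta_w^l x_n^{l-1}+\theta_b^l)$, i.e.\ the layer-$l$ activation of $\pi_{\theta_r}$.

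For (iii), induct on $i=l+1,\dots,L$: at each such layer $(\theta_w^i,\theta_b^i)$ is fixed to its original value, so constraint~(\ref{eq:mid_layer}), read with the same Big-M encoding, gives $x_n^i=R(\theta_w^i x_n^{i-1}+\theta_b^i)$, which matches $\pi_{\theta_r}$'s $i$-th activation by the inductive hypothesis on $x_n^{i-1}$; constraint~(\ref{eq:last_layer}) then gives $y_n=\theta_w^{L+1}x_n^L+\theta_b^{L+1}=\pi_{\theta_r}(x_n^0)$. For (iv), constraint~(\ref{eq:predicates}) asserts $\Psi(y_n,x_n^0)$ for every $x_n^0\in\mathcal{X}_r$, and substituting $y_n=\pi_{\theta_r}(x_n^0)$ gives $\Psi(\pi_{\theta_r}(x_n^0),x_n^0)$; for $x_n^0\notin\mathcal{X}_r$ the predicate is not imposed and the claim is vacuous, so the conclusion holds for every sample. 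The one genuinely nontrivial point — and where I would spend care — is the simultaneous \emph{exactness} and \emph{feasibility-preservation} of the Big-M encoding (and, for the disjunctive form $\bigvee_{c}\psi_c$, of the auxiliary indicator encoding): both hinge entirely on the soundness of the $lb,ub$ bounds, which is exactly what the IA construction in Appendix~\ref{supp: IA} guarantees. Everything else is routine substitution.
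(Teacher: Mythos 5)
Your proposal is correct and follows the same logic as the paper's proof, which is essentially a one-liner: feasibility means the hard constraint~(\ref{eq:predicates}) holds, and since the remaining constraints encode the forward pass of the repaired network, $y_n=\pi_{\theta_r}(x^0_n)$ and the predicate transfers to $\pi_{\theta_r}$. Your additional work — the exactness of the Big-M ReLU gadget and the layer-by-layer reconstruction of the forward pass — simply makes explicit what the paper leaves implicit, and it is carried out correctly.
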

\begin{proof}
 Since the feasible solutions $\theta_w^{l}$ and $\theta_b^{l}$ satisfy the hard constraint (\ref{eq:predicates}) 
 for the repair data set $\{(x^0_n,t_n)\}^N_{n=1}$, 
 $\Psi(\pi_{\theta_r}(x^0_n),x^0_n)$ is satisfied.
%  $\pi_{\theta_r}$ always satisfies 
% $\Psi(y,x^0)$ 
% for the input-output data set $\{(x^0_n,t_n)\}^N_{n=1}$.
\end{proof}

Given Thm. \ref{theorem: feasibility}, the following Corollary is straightforward.
\begin{corollary}
Given the predicate $\Psi(y,x^0)$, 
and the input-output data set $\{(x^0_n,t_n)\}^N_{n=1}$ 
sampled from  $(\mathcal{X}\times\mathcal{T}) \cup (\mathcal{X}_r\times\Tilde{\mathcal{T}})$ 
over the sets $\mathcal{X}$, $\mathcal{X}_r$, $\mathcal{T}$, 
and $\Tilde{\mathcal{T}}$, 
assume that $\theta_w^{l^*}$ and $\theta_b^{l^*}$ are the optimal solutions to the NNRepLayer (\ref{eq:cost})-(\ref{eq:w_bound}).
Then, for all input samples $x^0_n$ from $\{(x^0_n,t_n)\}^N_{n=1}$, 
$\Psi(\pi_{\theta_r}(x^0_n),x^0_n)$ is satisfied.
\end{corollary}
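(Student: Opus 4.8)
The plan is to obtain this statement as an immediate consequence of Theorem~\ref{theorem: feasibility}, using the elementary fact that an optimal solution of a constrained program is in particular a feasible one. First I would observe that, by the very definition of optimality, the pair $(\theta_w^{l^*},\theta_b^{l^*})$ lies in the feasible set of the MIQP (\ref{eq:cost})--(\ref{eq:w_bound}); hence it satisfies every constraint of that program, including the forward-pass constraints (\ref{eq:last_layer}) and (\ref{eq:mid_layer}) that identify $y_n$ with $\pi_{\theta_r}(x_n^0)$, the weight-deviation bound (\ref{eq:w_bound}), and in particular the hard predicate constraint (\ref{eq:predicates}), namely $\Psi(y_n,x_n^0)$ for every sample with $x_n^0\in\mathcal{X}_r$.

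Second, I would apply Theorem~\ref{theorem: feasibility} verbatim, instantiating its "feasible solution" $(\theta_w^l,\theta_b^l)$ with the optimal pair $(\theta_w^{l^*},\theta_b^{l^*})$: the hypotheses of that theorem are precisely feasibility with respect to (\ref{eq:cost})--(\ref{eq:w_bound}) under the stated data-set assumptions, which we have just verified. Its conclusion then gives $\Psi(\pi_{\theta_r}(x_n^0),x_n^0)$ for all input samples $x_n^0$ in $\{(x_n^0,t_n)\}_{n=1}^N$. For the samples with $x_n^0\notin\mathcal{X}_r$ there is nothing further to check, since the predicate is only required to hold on $\mathcal{X}_r$; for the samples with $x_n^0\in\mathcal{X}_r$ the claim is exactly what constraint (\ref{eq:predicates}) enforces on the optimal solution.

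There is no real obstacle here: the statement already presupposes that an optimal solution exists ("assume that $\theta_w^{l^*}$ and $\theta_b^{l^*}$ are the optimal solutions"), so the feasible set is nonempty and the reduction to Theorem~\ref{theorem: feasibility} is unconditional. The entire proof is the one-line chain \emph{optimal} $\Rightarrow$ \emph{feasible} $\Rightarrow$ ($\Psi$ holds on all repaired samples, by Theorem~\ref{theorem: feasibility}), which is why the excerpt flags it as "straightforward."
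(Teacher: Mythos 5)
Your proposal is correct and matches the paper's intended argument exactly: the paper states the corollary is ``straightforward'' given Theorem~\ref{theorem: feasibility}, precisely because an optimal solution of (\ref{eq:cost})--(\ref{eq:w_bound}) is in particular feasible, so the theorem applies verbatim. Nothing is missing; your one-line chain \emph{optimal} $\Rightarrow$ \emph{feasible} $\Rightarrow$ $\Psi$ satisfied is the whole proof.
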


\begin{theorem}[Soundness of Alg. \ref{alg:verifier}]
\label{theorem: soundness}
Assume $\textsc{Verifier}()$ is a sound verifier. If the Alg. \ref{alg:verifier} terminates, the predicate $\Psi$ is satisfied by the repaired network $\pi^{r}_{\theta}$.
\end{theorem}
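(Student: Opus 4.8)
The plan is to read the claim off the exit condition of the loop in Algorithm~\ref{alg:verifier} together with the defining property of a sound verifier, so the argument is short. First I would fix terminology: call $\textsc{Verifier}$ \emph{sound} in the sense that, for any network $\pi$, if $\textsc{Verifier}(\pi)=\emptyset$ then $\Psi(\pi(x^0),x^0)$ holds for every $x^0$ in the target input space (equivalently, the verifier never reports ``no counterexample'' when a violation exists), and that when a violation does exist it returns a nonempty set of witnessing inputs. This is exactly the guarantee provided by the sound verifiers cited, e.g.\ \citep{TjengXT2019iclr, katz2017reluplex}.

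Next I would trace the control flow. The body of the \textbf{while} loop first updates $\pi^{r}_{\theta}$ via $\textsc{NNRepLayer}$ and then sets $\mathcal{X}_{r}\leftarrow\textsc{Verifier}(\pi^{r}_{\theta})$; the loop repeats only while $\mathcal{X}_{r}\neq\emptyset$. Hence ``Algorithm~\ref{alg:verifier} terminates'' means precisely that, after some iteration $k$, the guard fails, i.e.\ the value of $\mathcal{X}_{r}$ returned by the $k$-th call to $\textsc{Verifier}$ equals $\emptyset$. Because that call was made on the current repaired network $\pi^{r}_{\theta}$, soundness of the verifier immediately yields $\Psi(\pi^{r}_{\theta}(x^0),x^0)$ for all $x^0$ in the target input space, which is the claim. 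Theorem~\ref{theorem: feasibility} guarantees, as a side remark, that each intermediate $\pi^{r}_{\theta}$ at least satisfies $\Psi$ on the finite sample set handed to $\textsc{NNRepLayer}$, but this is not needed for soundness; it is the verifier that lifts the guarantee to the whole continuous region.

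The one real subtlety, and the place I would be careful, is bookkeeping of the symbol $\mathcal{X}_{r}$: it denotes the fixed target input space of interest in the theorem statement, yet inside the loop it is reused to hold the shrinking set of freshly discovered counterexamples. I would make this explicit by distinguishing the fixed region, say $\mathcal{X}_{r}^{\star}$, against which $\textsc{Verifier}$ always checks, from the working variable $\mathcal{X}_{r}$ that is passed to $\textsc{NNRepLayer}$; with this separation the soundness definition is applied to $\mathcal{X}_{r}^{\star}$ and there is no circularity. I would also stress what is \emph{not} being claimed: the statement is conditional on termination, and termination itself, i.e.\ the loop emptying $\mathcal{X}_{r}$ in finitely many rounds, is neither asserted here nor guaranteed in general.
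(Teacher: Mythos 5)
Your proof is correct and follows essentially the same reasoning as the paper: termination means the last call to $\textsc{Verifier}$ returned an empty $\mathcal{X}_r$, and soundness of the verifier then guarantees $\Psi$ holds for the repaired network $\pi^{r}_{\theta}$. Your additional remarks on the overloaded use of $\mathcal{X}_r$ and on termination not being claimed are sensible clarifications but do not change the argument.
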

\begin{proof}
    Given that $\textsc{Verifier}()$ is assumed to be sound, if the algorithm terminates, $\mathcal{X}_{r}$ is empty which means $\textsc{Verifier}()$ did not find other samples that violate $\Psi$. Therefore,  the predicate $\Psi$ is guaranteed to be satisfied by $\pi^{r}_{\theta}$.
\end{proof}

% \newpage
\section{More Details on Experimental Results}
\label{supp:result}

\subsection{Experimental Setup}
We trained a policy network $\pi_{\theta}$ for controlling a prosthesis, 
which then undergoes the repair process to ensure compliance with the safety constraints. 
% \begin{wrapfigure}[12]{r}{0.2\textwidth}
% \vspace{-15pt}
%   \begin{center}
%     \includegraphics[scale = 1.1]{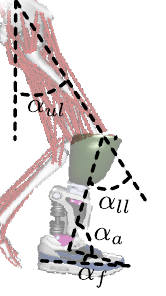}
%   \end{center}
%   \caption{Prosthetic device model.}
%   \label{fig:foot}
% \end{wrapfigure}
To this end, we first train the model using an imitation learning~\citep{schaal1999imitation} strategy. 
For data collection, we conducted a study approved by the Institutional Review Board (IRB), 
in which we recorded the walking gait of a healthy subject without any prosthesis. 
Walking data included three inertial measurement units (IMUs) mounted via straps to the upper leg (Femur), 
lower leg (Shin), and foot. 
The IMUs acquired both the angle and angular velocity of each limb portion in the world coordinate frame at 100Hz. 
Ankle angle $\alpha_a$ was calculated as a post process from the foot and lower limb IMUs. 
We then trained the NN to generate the ankle angle from upper and lower limb IMU sensor values.
% More specifically, the NN model receives the angle and velocity from the upper and lower limb sensors 
% (network inputs $x^0$),
% $\alpha_{ul},\dot{\alpha}_{ul}, \alpha_{ll}, \dot{\alpha}_{ll}$, respectively,
% to predict the ankle angle $\alpha_a$ (network output $y$) which is, later, 
% used as the control parameter for a PD controller on the prosthetic. 
% See Fig.~\ref{fig:foot} for a visualization of the individual sensor readings. 
We used a sliding window of input variables, denoted as $dt$ ($dt=10$ in all our experiments), 
to account for the temporal influence on the control parameter and 
to accommodate for noise in the sensor readings. 
Therefore, the input to the network is $dt\times \lvert x^0\rvert$,  or more specifically the current and previous $dt$ sensor readings. 
% In all experiments, we trained a three-hidden-layer deep policy network with $32$ ReLU nodes at each hidden layer.
After the networks were fully trained we assessed the policy for constraint violations and collected samples for NNRepLayer.
\begin{wrapfigure}{r}{0.5\textwidth}
\vspace{-5pt}
% \vspace{-5pt}
    \centering \includegraphics[scale=0.48]{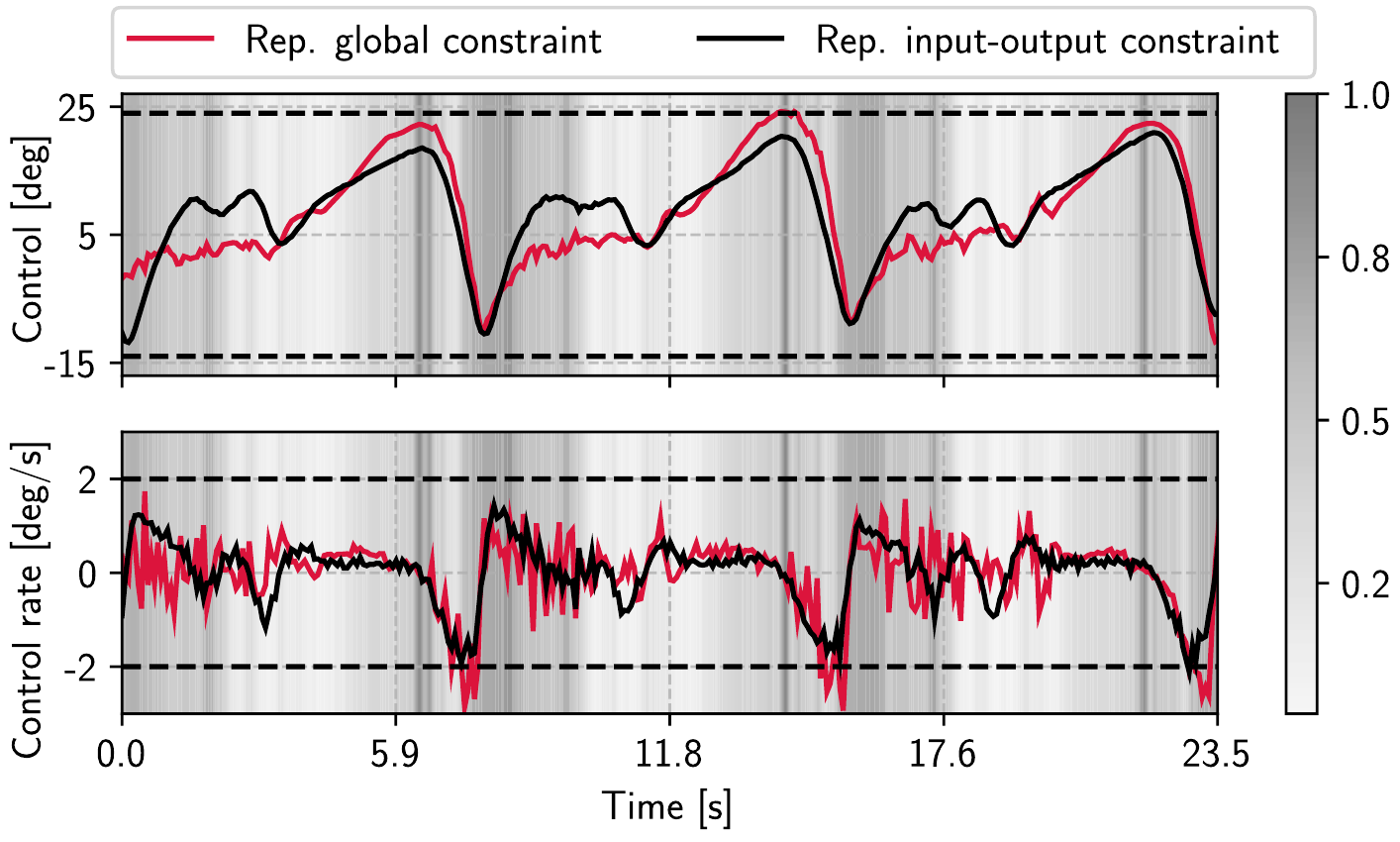}
%   \vspace{-10pt}
  \caption{Real prosthesis walking test results for imposing the global constraint of $[-14,24]$ to the control (shown in red) and bounding the control rate by $2$ [deg/s] (shown in black). The color bar represents the normalized $L_2$-distance of each test input to its nearest neighbor in the repair set. }
  \vspace{-5pt}
\label{fig:exec_signal}
\end{wrapfigure}
We tested NNRepLayer on the last and the second to the last layer of network policy
to satisfy the constraints with a subset of the original training data including both adversarial and non-adversarial samples. 
In all experiments, we used $150$ samples in NNRepLayer and a held out set of size $2000$ for testing. 
Finally, the repaired policies to satisfy global and input-output constraints are tested on a prosthetic device for $10$ minutes of walking, see Fig. \ref{fig:exec_signal}. 
More specifically, the same healthy subject was fitted with an ankle bypass; a carbon fiber structure molded to the lower limb and constructed such that a prosthetic ankle can be attached to allow the able-bodied subject to walk on the prosthesis. 
The extra weight and off-axis positioning of the device incline the individual towards slower, asymmetrical gaits that generates strides out of the original training distribution \citep{cortino2022stair, gao2020recurrent}. 
The participant is then asked to walk again for $10$ minutes to assess whether constraints are satisfied.
Adversarial samples in the repair data set are hand-labeled for fine-tuning and retraining
so that the target outputs satisfy the given predicates.
In fine-tuning, as proposed in \citep{sinitsin2019editable,taormina2020performance}, we used the collected adversarial data set to train all the parameters of the original policy by gradient descent using a small learning rate ($10^{-4}$). To avoid over-fitting to the adversarial data set, we trained the weights of the top layer first, and thereafter fine-tuned the remaining layers for a few epochs.
The same hand-labeling strategy is applied in retraining, except that a new policy is trained from scratch for all original training samples.
In both methods, we trained the policy until all the adversarial samples in 
the repair data set satisfy the given predicates on the network's output. Our code is available on GitHub: \url{https://github.com/k1majd/NNRepLayer.git}.
% \newpage
\subsection{Experiments and Results}
\paragraph{Global Constraint.} 
% \begin{figure*}[t]

%     \centering
%     \includegraphics[width=\textwidth]{global_constraint.pdf}

%     \caption{Global constraint: (a) ankle angle, $\alpha_{a}$, (b) the error between the predicted and the reference controls, (c) the violation degree vs. $L_2$-distance between the test and repair sample inputs.}
%     \label{fig:global_const}
% \end{figure*}
\begin{figure*}[t]

    \centering
    \includegraphics[width=0.95\textwidth]{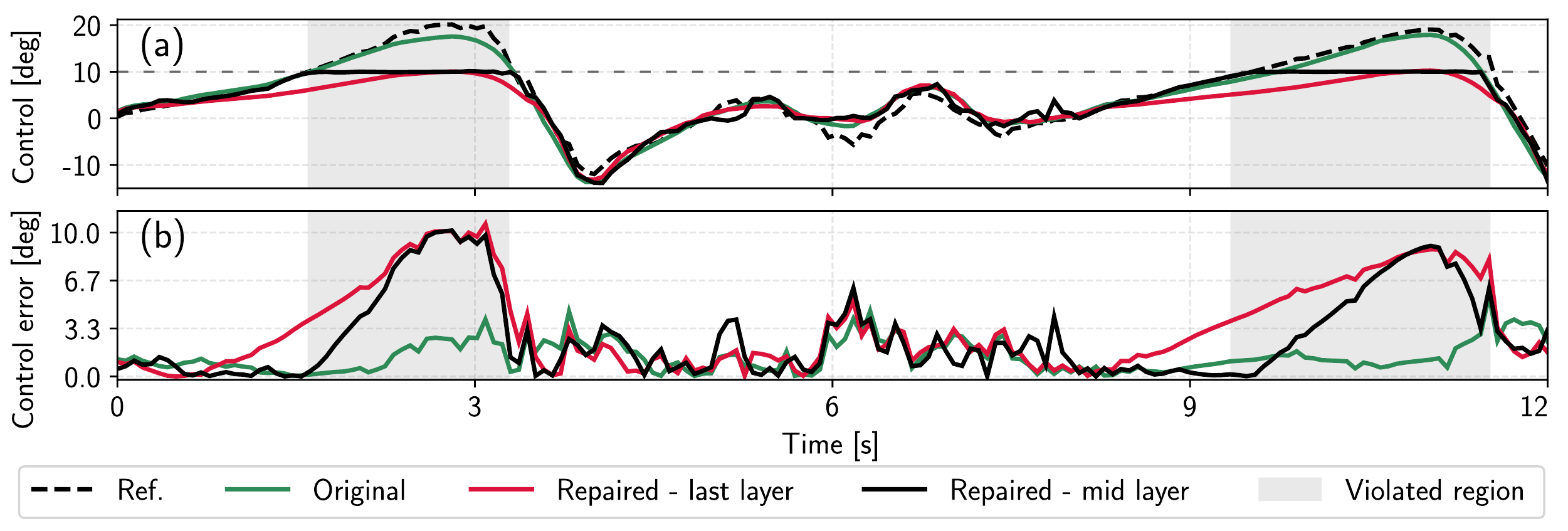}

    \caption{Global constraint: (a) ankle angle, $\alpha_{a}$, (b) the error between the predicted and the reference controls.}
    \label{fig:global_const}
\end{figure*}

The global constraint ensures that the prosthesis control, i.e., $\alpha_{a}$, stays within a certain range and never outputs an unexpected large value that disturbs the user's walking balance. 
Additionally, the prosthetic device we utilized in these scenarios contains a parallel compliant mechanism.
As such, either the human subject or the robotic controller could potentially drive the mechanism into the hard limits, potentially damaging the device.
In our walking tests, see Fig. \ref{fig:exec_signal}, we therefore specified global constraints such that the ankle angle stays within the bounds of $[-14,24]$ [deg] regardless of whether it is driven by the human or the robot. 
In simulation experiments. see Fig. \ref{fig:global_const} (a), we enforced artificially strict bounds on the ankle angle $\alpha_{a}$ 
to never exceed $\alpha_{a}=10$ [deg] which is a harder bound to satisfy.
% We defined the degree of violation as $0$ if 
% $\alpha_{a} \in [\alpha_a^{min}, \alpha_a^{max}]$, and $\min\{\lvert \alpha_{a}-\alpha_a^{max}\rvert , \lvert \alpha_{a} - \alpha_a^{min}\rvert\}$, otherwise.

% As shown in Fig. \ref{fig:global_const} (a), 
% the repaired network successfully satisfies
% the constraints in the original faulty regions
% while maintaining the tracking performance of the controller in the unconstrained regions. 

\paragraph{Conditional Constraint.} 
Depending on the ergonomic needs and medical history of a patient,
the attending orthopedic doctor or prosthetist may identify certain body configurations that are harmful, e.g., they may increase the risk of osteoarthritis or musculoskeletal conditions~\citep{morgenroth2012osteoarthritis, ekizos2018}. Following this rationale, we define a region $\mathcal{S}$ 
of joint angles space that should be avoided.
An example of such a region is demonstrated in Fig.~\ref{fig:flow_fig} as 
a grey box 
$\mathcal{S}=\{(\alpha_{ul}, \alpha_{a})~|~\alpha_{ul}\in [-2, -0.5], \alpha_{a}\in[1,3]\}$ 
in the joint space of ankle and femur angles. 
\begin{wrapfigure}{r}{0.355\textwidth}
\vspace{-5pt}
 \centering \includegraphics[scale=0.7]{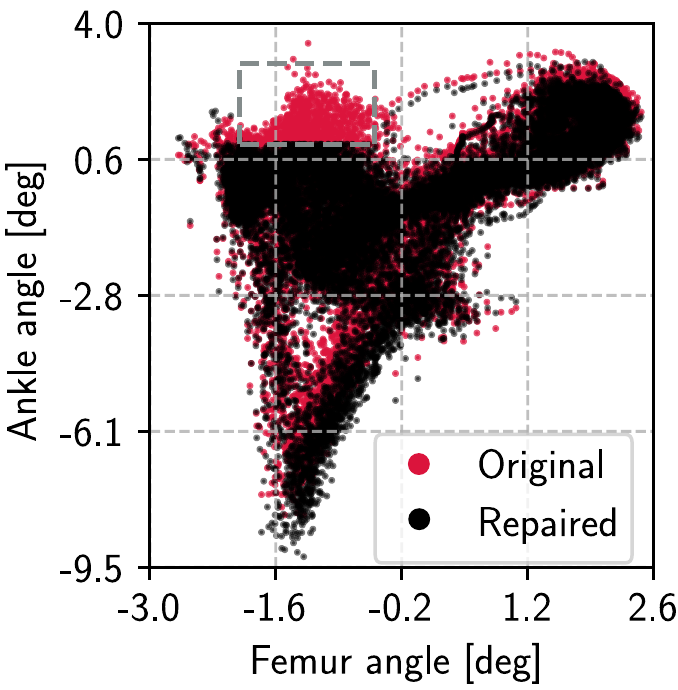}
  \caption{Enforcing the conditional constraints to keep the joint femur-ankle angles out of the grey box.}
  \label{fig:flow_fig}
  \vspace{-10pt}
\end{wrapfigure}
To satisfy this constraint the control rate should be tuned such that the joint ankle and femur angles stay out of set $\mathcal{S}$. 
This constraint can be defined as an if-then-else proposition  $\alpha_{ul}\in[-2, -0.5] \implies \big(\alpha_a\in(-\infty,1]\big)\vee\big(\alpha_a\in[3,\infty)\big)$ 
which can be formulated as the disjunction of linear inequalities on the network's output. 
% For each given test input and its corresponding output $\alpha_a$, 
% the degree of violation is defined as the distance of $\alpha_a$ to the box if $\alpha_a$ is outside the box, and $0$
%  otherwise.
%  as $0$ if $\alpha_a$ is 
% inside the box and its distance to the box otherwise.
Figure \ref{fig:flow_fig} demonstrates the output of new policy after repairing with NNRepLayer. As it is shown, our method avoids the joint ankle and femur angles to enter the unsafe region $S$. 
% Figure \ref{fig:violate_dyn_cond} (b) also illustrates low output violation degree for the distant test input samples from the repair input set.
Finally, we observed that repairing the last layer does not result in a feasible solution.
% \paragraph{Comparison w\textbackslash Fine-tuning, Retraining.}
% In each experiment, we demonstrated the violation degree of our method compared with fine-tuning and retraining. Table \ref{tab:my-table} better illustrates the success of our method in satisfying the constraints while maintaining the control performance.
% \begin{wrapfigure}{r}{0.4\textwidth}
%  \centering \includegraphics[scale=0.7]{flow_fig.pdf}
%   \caption{NNRepLayer enforcing the conditional constraints to keep the joint femur-ankle angles out of the grey box.}
%   \label{fig:flow_fig}
%   \vspace{-30pt}
% \end{wrapfigure}
% As shown in Table \ref{tab:my-table}, retraining and NNRepLayer both perform well in maintaining the 
% minimum absolute error and the generalization of constraint satisfaction to the unseen testing samples 
% for global and input-output constraints. 
% However, according the Table \ref{tab:my-table} the satisfaction of if-then-else constraints is challenging for retraining and fine-tuning as the repair efficacy is dropped by almost $30\%$ using these techniques. It also highlights the power of our technique in generalizing the satisfaction of conditional constraints to the unseen cases.
\paragraph{Comparison w\textbackslash Fine-tuning, Retraining, and REASSURE.}
% In each experiment, we demonstrated the violation degree and the control signals of our method compared with fine-tuning, retraining \citep{sinitsin2020editable,taormina2020performance,ren2020few,DongEtAl2021qrs}, and REASSURE \citep{FuLi2022iclr}. 
% Comparing to \citep{FuLi2022iclr}, while REASSURE guarantees the satisfaction of constraints in the local repaired linear regions, 
% we showed that this method significantly reduces the performance of network in the repaired local regions, see Figures \ref{fig:global_const} and \ref{fig:dynamic_const_xxx}. 
% This method cannot address the input-output constraints given the faulty samples, and it introduces 500 times more faulty samples compared to our technique. 
% REASSURE cannot also accommodate the conditional constraints.
% Unlike REASSURE that guarantees the satisfaction of constraints for the samples in the same linear region as the repaired samples, 
% our technique only guarantees the satisfaction of constraints for the repaired samples. 
% While we empirically showed the  generalizability of our technique in a local neighborhood of the repaired samples, 
% our method does not theoretically guarantee the satisfaction of constraints for the unseen adversarial samples.
% We proposed a sound algorithm in the supplementary materials, 
% Sec. \ref{supp: soundness}, that  guarantees the safety for all other unseen samples. 
% We also showed that this method performs poorly in generalizing the repair for the input-output and conditional constraints.
Table \ref{tab:my-table} better illustrates the success of our method in satisfying the constraints while maintaining the control performance.
As shown in Table \ref{tab:my-table}, retraining and NNRepLayer both perform well in maintaining the 
minimum absolute error and the generalization of constraint satisfaction to the unseen testing samples 
for global and input-output constraints. 
However, the satisfaction of if-then-else constraints is challenging for retraining and fine-tuning as the repair efficacy is dropped by almost $30\%$ using these techniques. 
It also highlights the power of our technique in generalizing the satisfaction of conditional constraints to the unseen cases.

% \begin{table}[t]
% \centering
% \caption{The table reports the Mean Absolute Error (MAE) between the repaired and the original outputs, the percentage of adversarial samples that are repaired, also called the Repair Efficacy, and the runtime of NNRepLayer for each constraint case. The metrics for fine-tuning and retraining are the average of 50 runs. For fixed repair samples, NNRepLayer returns the same outcome each run.}
% \label{tab:my-table}

% \begin{adjustbox}{width=1\textwidth,center=\textwidth}
% \resizebox{\textwidth}{!}{%
% \setlength{\aboverulesep}{0pt}
% \setlength{\belowrulesep}{0pt}
% \setlength{\extrarowheight}{.75ex}
% \begin{tabular}{@{}cccc
% >{\columncolor[HTML]{EFEFEF}}c 
% >{\columncolor[HTML]{EFEFEF}}c 
% >{\columncolor[HTML]{C0C0C0}}c ccc@{}}
% \toprule
%  & \multicolumn{3}{c}{MAE}      & \multicolumn{3}{c}{\cellcolor[HTML]{EFEFEF} Repair Efficacy [\%]} & \multicolumn{3}{c}{Runtime {[}s{]}} \\ 
%  & Fine-tune & Retrain & NNRepLayer & Fine-tune            & Retrain           & NNRepLayer           & Fine-tune    & Retrain   & NNRepLayer   \\ \midrule
% Global       & $1.2\pm0.03$ & $1.4\pm0.08$           & $1.4\pm 0.11$   & $97\pm4$ & $98\pm3$ & $99\pm 1$ & $25\pm13$ & $127\pm30$ & $233 \pm 159$ \\
% Input-output & $0.6\pm0.03$          & $0.5\pm0.04$  & $0.5\pm 0.03$ & $88\pm2$ & $98\pm1$ & $98\pm1$ & $8\pm2$   & $101\pm1$  & $112 \pm 122$  \\
% Conditional  & $0.7\pm0.10$          & $0.31\pm0.03$ & $0.35 \pm 0.07$         & $72\pm5$ & $76\pm2$ & $93\pm 2$ & $18\pm3$  & $180\pm2$  & $480 \pm 110$ \\ \bottomrule 
% \end{tabular}%
% }
% \end{adjustbox}
% \end{table}

\begin{table*}[t]
\centering
\caption{The table reports: 
RT: runtime, 
MAE: Mean Absolute Error between the repaired and the original outputs, 
RE: the percentage of adversarial samples that are repaired (Repair Efficacy), 
and 
IB: the percentage of test samples that were originally safe but became faulty after the repair (Introduced Bugs). 
The metrics are the average of 50 runs. 
% Note that NNRepLayer and REASSURE \citep{FuLi2022iclr} return the same outcomes for the fixed repair samples at each run, 
% so the reported metrics are the average of repairing 50 different trained networks.
}
\label{tab:my-table}

\begin{adjustbox}{width=1\textwidth,center=\textwidth}
\setlength{\aboverulesep}{0pt}
\setlength{\belowrulesep}{0pt}
\setlength{\extrarowheight}{.75ex}
% \vspace{2cm}

\begin{tabular}{@{}
ccccccccc
@{}}
\toprule
 & \multicolumn{4}{c}{\cellcolor[HTML]{EFEFEF} NNRepLayer} 
 & \multicolumn{4}{c}{REASSURE~\citep{FuLi2022iclr}} \\ 
    &\cellcolor[HTML]{EFEFEF} RT [s] & \cellcolor[HTML]{EFEFEF} MAE & \cellcolor[HTML]{C0C0C0} RE [\%] & \cellcolor[HTML]{C0C0C0} IB [\%] 
    &RT [s]& MAE & RE [\%] & IB [\%]  \\ \midrule
Global  
        & \cellcolor[HTML]{EFEFEF} $233\pm159$ & \cellcolor[HTML]{EFEFEF} $1.4\pm 0.11$ & \cellcolor[HTML]{C0C0C0} $99\pm 1$ & \cellcolor[HTML]{C0C0C0} $0.09 \pm 0.20$
        & $14 \pm 1$ & $2.3\pm 0.78$ & $97\pm 1$ & $0$
        \\
Input-output 
        & \cellcolor[HTML]{EFEFEF} $112\pm122$ & \cellcolor[HTML]{EFEFEF} $0.5\pm 0.03$ & \cellcolor[HTML]{C0C0C0} $98\pm 1$ & \cellcolor[HTML]{C0C0C0} $0.19 \pm 0.18$
        & $30 \pm 8$ & $0.6\pm 0.03$ & $19\pm 4$ & $85\pm 5$
        \\
Conditional  
        & \cellcolor[HTML]{EFEFEF} $480\pm110$ & \cellcolor[HTML]{EFEFEF} $0.35\pm 0.07$ & \cellcolor[HTML]{C0C0C0} $93\pm 2$ & \cellcolor[HTML]{C0C0C0} $0.11 \pm 0.26$
        & Infeasible & Infeasible & Infeasible & Infeasible
        \\ 
        % \bottomrule \\
        \toprule
& \multicolumn{4}{c}{Fine-tune} 
 & \multicolumn{4}{c}{Retrain} \\ 

    &RT [s]& MAE & RE [\%] & IB [\%]  
    &RT [s]& MAE & RE [\%] & IB [\%]  \\ \midrule
Global  
        & $25\pm13$ & $1.2\pm 0.03$ & $97\pm 4$ & $0.95 \pm 0.45$
        & $127 \pm 30$ & $1.4\pm 0.08$ & $98\pm 3$ & $0.65 \pm 0.40$
        \\
Input-output 
        & $8\pm 2$ & $0.6\pm 0.03$ & $88\pm 2$ & $2.47 \pm 0.49$
        & $101 \pm 1$ & $0.5\pm 0.04$ & $98\pm 1$ & $0.28 \pm 0.32$
        \\
Conditional  
        & $18\pm 3$ & $0.7\pm 0.10$ & $72\pm 5$ & $0.27 \pm 0.25$
        & $180 \pm 2$ & $0.31\pm 0.03$ & $76\pm 2$ & $0.12 \pm 0.35$
        \\ \bottomrule 
\end{tabular}%
\end{adjustbox}
\end{table*}

\subsection{Testing Repair on Larger Networks}
\label{supp:256}
To demonstrate the scalability of our method, we conducted a repair experiment on a network with 256 neurons in each hidden layer. We used 1000  samples for repair and 2000 samples for testing. We formulate this problem in MIQP and run the program on a Gurobi \citep{gurobi} solver. We terminated the solver after 10 hours and report the best found feasible solution. Figure \ref{fig:net_256} and Table \ref{table:net_256} show the control signal, and the statistical results of this experiment, respectively. As demonstrated, our technique repaired a network with up to 256 nodes with 100\% repair efficacy in 10 hours.  Similar network structure and sizes are frequently used in robotics and controls tasks. Examples include \citep{fernandez2020deep} (3 hidden layer, 256 nodes), \citep{landgraf2021reinforcement} (2 hidden layer, 64 nodes), \citep{pinosky2022hybrid} (2 hidden layer, 200 nodes), \citep{zimmer2018developmental} (2 hidden layer, 50 nodes), and \citep{kristoffersen2021user} (2 hidden layer, 50 nodes).
\begin{table}[tbh]
\centering
\caption{ Experimental results for repairing a network with 256 nodes in each hidden layer for the input-output constraint repair, maximum ankle angle rate of $2$ [rad/s]. The table reports the size of network, the number of samples, the Mean Absolute Error (MAE) between the repaired and the original outputs, the percentage of adversarial samples that are repaired (Repair Efficacy), and the runtime.}
\label{table:net_256}
\begin{adjustbox}{width=0.9\textwidth,center=\textwidth}
\begin{tabular}{@{}cccccc}
\toprule
  & Network Size      & Number of Samples & MAE & Repair Efficacy [\%] & Runtime [h] \\ \midrule
  Input-output Constraint & 256 & 1000 & 0.62 & 100 & 10\\ \bottomrule 
\end{tabular}%
\end{adjustbox}
\end{table}
%===============================================================================
\begin{figure*}[tbh]

    \centering
    \includegraphics[width=\textwidth]{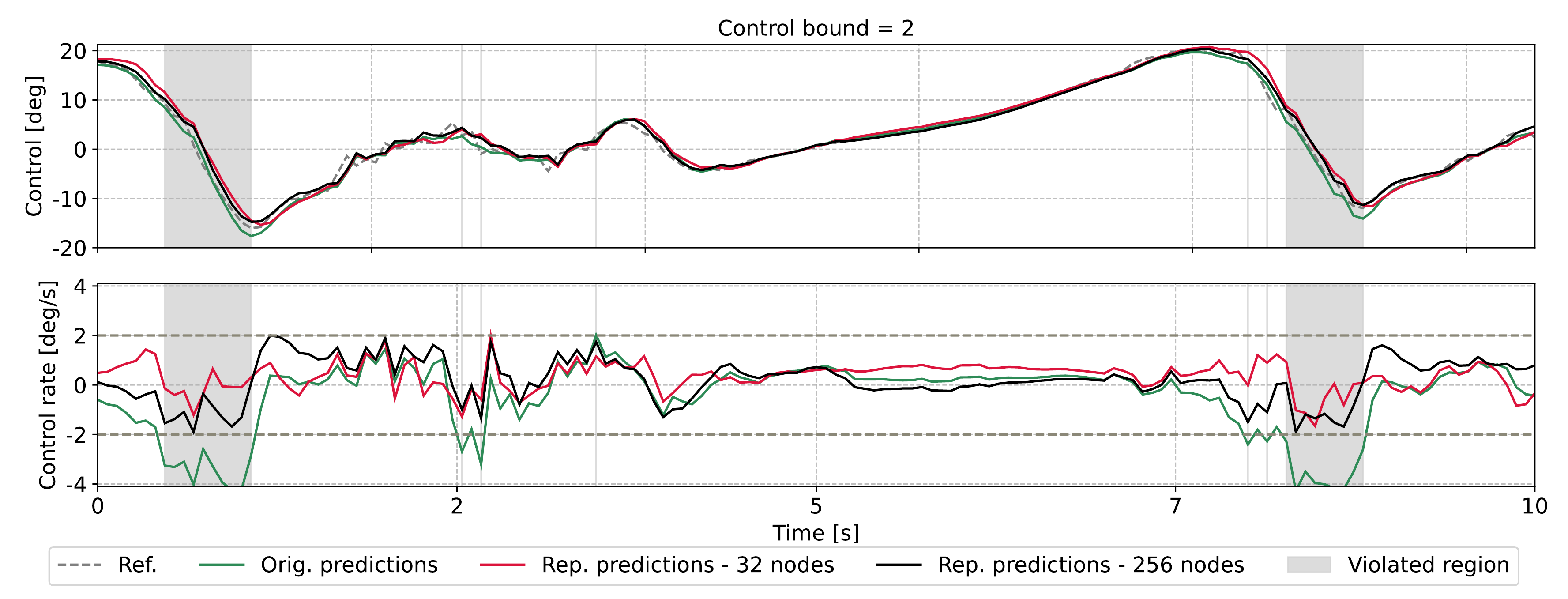}

    \caption{Input-output constraints for networks with 32 (red) and 256 (black) nodes in each hidden layer: Ankle angles and Ankle angle rates for bounds $\Delta \alpha_a = 2$.}
    \label{fig:net_256}
\end{figure*}

\subsection{Heuristics for Computational Speedups of NNRepLayer}
\label{supp: subnode}
In this section we provide examples for computational speedups and heuristics that lead to faster neural network repair. 
To this end, we repaired randomly selected nodes of a single hidden layer in a network with 64 hidden nodes for 35 times. 
We let the solver run for 30 minutes in each experiment (versus the full repair that is solved in 6 hours). 
Figure \ref{fig:random_nodes} demonstrates the mean absolute error (MAE), the total number of repaired weights, repair efficacy, and the original MIQP cost. 
Here, to detect the sparse nodes that can satisfy the constraints, we solved the original full repair by adding the $l_{1}$ norm-bounded error of repaired weights with respect to their original values to the MIQP cost function. 
The bold bars in Fig. \ref{fig:random_nodes} demonstrate the results of repairing the $10$ randomly-selected sparse nodes. 
Repairing of the obtained sparse nodes reached a cost value very close to the cost value of the originally full repair problem ($42.99$ versus $40.29$), \textbf{in only 30 minutes} versus 6 hours. 
As illustrated, repair of some random nodes also results in infeasibility (blank bars) that shows these nodes cannot repair the network. 
In our future work, we aim to explore the techniques such as neural network pruning \citep{hassibi1993optimal, lecun1989optimal}
to select and repair just the layers and nodes that can satisfy the constraints instead of repairing a full layer. 
Our results in this experiment show that repairing partial nodes can significantly decrease the computational time of our technique.
\begin{figure*}[t]

    \centering
    \includegraphics[width=\textwidth]{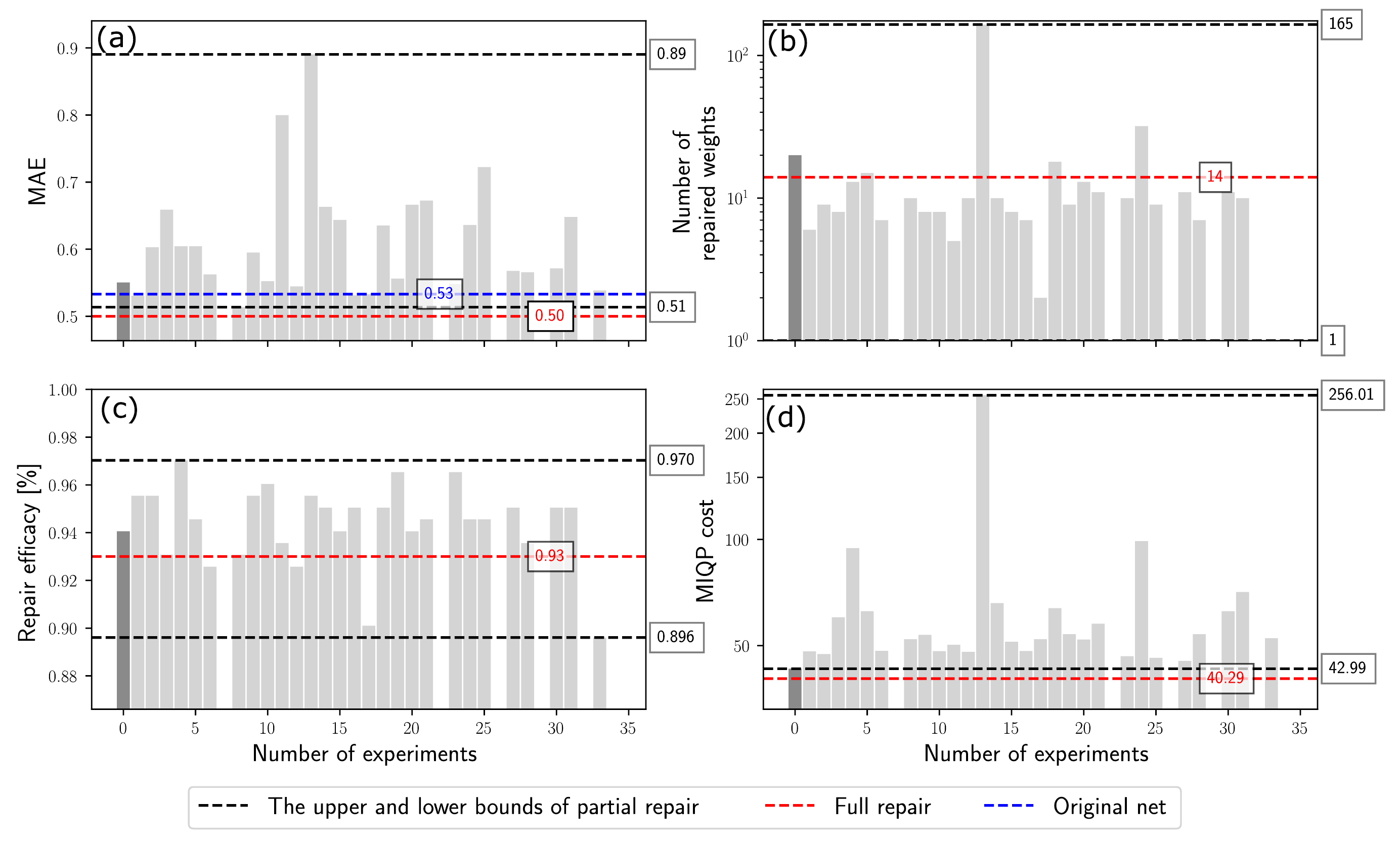}

    \caption{Partial repair versus full repair: we repaired 10 randomly selected nodes for 30 minutes in a network with 64nodes in each hidden layer (a) mean absolute error (MAE), (b) the total number of repaired weights, (c) repair efficacy, and (d) MIQP cost. We performed this random selections for 35 times.}
    \label{fig:random_nodes}
\end{figure*}

\end{document}

% --- supplement: supp.tex ---

\maketitle
\section{Interval Arithmetic Method}
\label{supp: IA}
To illustrate how we generated a tight valid bound for each ReLU activation node, we used the Interval Arithmetic method \citep{moore2009introduction, TjengXT2019iclr}. 
Interval arithmetic is widely used in verification to find an upper and a lower bounds over the relaxed ReLU activations given a bounded set of inputs. 
We used the same approach to find the tight bounds over the ReLU nodes assuming the weights can only perturb inside a bounded $l_{\infty}$ error with respect to the original weights. 
Assume we denote each input variable of repair layer $L$ as $x^{L-1}(i)$, the weight term that connect $x^{L-1}(i)$ to $x^{L}(j)$ as $\theta^L_w(ij)$, and the bias term of nodes $x^L(j)$ as $\theta^L_b(j)$. 
Given the bounds for variables $\theta^L_w(ij)\in \big[\underline{\theta}^l_w(ij), \bar{\theta}^L_w(ij)\big]$ 
and $\theta^L_b(ij)\in \big[\underline{\theta}^L_b(ij), \bar{\theta}^L_b(ij)\big]$, the interval arithmetic gives the valid upper and lower bounds for $x^{L}(j)$ as
\begin{align*}
&\bar{x}^{L}(j) = \sum_{i} \Big(\bar{\theta}^L_w(ij)\max(0,x^{L-1}(i)) + \underline{\theta}^L_w(ij)\min(0,x^{L-1}(i))\Big) + \bar{\theta}^L_b(ij),\text{ and}\\
&\underline{x}^{L}(j) = \sum_{i} \Big(\underline{\theta}^L_w(ij)\max(0,x^{L-1}(i)) + \bar{\theta}^L_w(ij)\min(0,x^{L-1}(i))\Big) + \underline{\theta}^L_b(ij),
\end{align*}
respectively. The bounds over the ReLU nodes in the subsequent layers $l = L+1,\cdots N$ are obtained as 
\begin{align*}
&\bar{x}^{l}(j) = \sum_{i} \Big(\bar{x}^{l-1}\max(0,\theta^l_w(ij)) + \underline{x}^{l-1}\min(0,\theta^l_w(ij))\Big) + \theta^l_b(ij),\\
&\underline{x}^{l}(j) = \sum_{i} \Big(\underline{x}^{l-1}\max(0,\theta^l_w(ij)) + \bar{x}^{l-1}\min(0,\theta^l_w(ij))\Big) + \theta^l_b(ij).
\end{align*}

% \newpage
\section{Theorems and Proofs}
\label{supp: th}
\begin{theorem}[Soundness of NNRepLayer]
\label{theorem: feasibility}
Given the predicate $\Psi(y,x^0)$, 
and the input-output data set $\{(x^0_n,t_n)\}^N_{n=1}$ 
sampled from  $(\mathcal{X}\times\mathcal{T}) \cup (\mathcal{X}_r\times\Tilde{\mathcal{T}})$ 
over the sets $\mathcal{X}$, $\mathcal{X}_r$, $\mathcal{T}$, 
and $\Tilde{\mathcal{T}}$, 
assume that $\theta_w^{l}$ and $\theta_b^{l}$ are feasible solutions to (1)-(5).
Then, $\Psi(\pi_{\theta_r}(x^0_n),x^0_n)$ is satisfied
for all input samples $x^0_n$.
% The repaired network $\pi_{\theta}^{\theta_w^{l} ,\theta_b^{l}}$ is guaranteed to satisfy 
% $\Psi(y,x^0)$ for the input-output data set $\{(x^0_n,t_n)\}^N_{n=1}$.
\end{theorem}
\begin{proof}
 Since the feasible solutions $\theta_w^{l}$ and $\theta_b^{l}$ satisfy the hard constraint (4) 
 for the repair data set $\{(x^0_n,t_n)\}^N_{n=1}$, 
 $\Psi(\pi_{\theta_r}(x^0_n),x^0_n)$ is satisfied.
%  $\pi_{\theta_r}$ always satisfies 
% $\Psi(y,x^0)$ 
% for the input-output data set $\{(x^0_n,t_n)\}^N_{n=1}$.
\end{proof}

Given Thm. \ref{theorem: feasibility}, the following Corollary is straightforward.
\begin{corollary}
Given the predicate $\Psi(y,x^0)$, 
and the input-output data set $\{(x^0_n,t_n)\}^N_{n=1}$ 
sampled from  $(\mathcal{X}\times\mathcal{T}) \cup (\mathcal{X}_r\times\Tilde{\mathcal{T}})$ 
over the sets $\mathcal{X}$, $\mathcal{X}_r$, $\mathcal{T}$, 
and $\Tilde{\mathcal{T}}$, 
assume that $\theta_w^{l^*}$ and $\theta_b^{l^*}$ are the optimal solutions to the NNRepLayer.
Then, for all input samples $x^0_n$ from $\{(x^0_n,t_n)\}^N_{n=1}$, 
$\Psi(\pi_{\theta_r}(x^0_n),x^0_n)$ is satisfied.
\end{corollary}

\begin{theorem}[Soundness of Alg. 1]
\label{theorem: soundness}
Assume $\textsc{Verifier}()$ is a sound verifier. If the Alg. 1 terminates, the predicate $\Psi$ is satisfied by the repaired network $\pi^{r}_{\theta}$.
\end{theorem}
\begin{proof}
    Given that $\textsc{Verifier}()$ is assumed to be sound, if the algorithm terminates, $\mathcal{X}_{r}$ is empty which means $\textsc{Verifier}()$ did not find other samples that violate $\Psi$. Therefore,  the predicate $\Psi$ is guaranteed to be satisfied by $\pi^{r}_{\theta}$.
\end{proof}

% \newpage
\section{More Details on Experimental Results}
\label{sec:result}

% We explore the applicability of the framework in satisfying the following three types of constraints. 
% \textbf{Global constraints} that encode global bounds on the network's output, 
% i.e., $y\in [y_{min},y_{max}]$.
% \begin{wrapfigure}[12]{r}{0.2\textwidth}
% \vspace{-15pt}
%   \begin{center}
%     \includegraphics[scale = 1.1]{prosthesis.pdf}
%   \end{center}
%   \caption{Prosthetic device model.}
%   \label{fig:foot}
% \end{wrapfigure}
% \textbf{Input-output constraints} that ensure the network's output $y$ 
% to stay within a certain bound with respect to the network's input $x^0$, 
% i.e., $\{\psi_c(x^0,y)\leq 0\}^C_{c=0}$, where $C$ is the number of constraints and $\psi_c$ is an affine function of $x^0$ and $y$.
% Finally, \textbf{conditional constraints} that encode if-then-else constraints described as 
% $\{\psi_c(x^0,y)\leq 0 \text{, if }x^0,y\in S_c\}^C_{c=0}$, 
% where $C$ specifies the number of conditions, 
% $\psi_c$ is an affine function of $x^0$ and $y$,
% and $S_c\subseteq \mathcal{X}\times\mathcal{T}$. 
% We designed a number of experiments to validate that our repair framework can successfully apply these
% constraints to the policy network. 
% Following our motivation, all experiments were performed on the prosthetic walking gait generation task introduced in Fig.~\ref{fig:repair_intro}. 
% Through these experiments we aim to answer the following questions: 
% (1) Does our method enable the Prosthetic device to address all the three types of aforementioned constraints? 
% (2) Can the repaired controller be employed in a real walking scenario successfully? 
% (3) How robust is the policy repaired through our technique against the unseen constraint-violating samples?

\subsection{Experimental Setup}
We trained a policy network $\pi_{\theta}$ for controlling a prosthesis, 
which then undergoes the repair process to ensure compliance with the safety constraints. 
% \begin{wrapfigure}[12]{r}{0.2\textwidth}
% \vspace{-15pt}
%   \begin{center}
%     \includegraphics[scale = 1.1]{prosthesis.pdf}
%   \end{center}
%   \caption{Prosthetic device model.}
%   \label{fig:foot}
% \end{wrapfigure}
To this end, we first train the model using an imitation learning~\citep{schaal1999imitation} strategy. 
For data collection, we conducted a study approved by the Institutional Review Board (IRB), 
in which we recorded the walking gait of a healthy subject without any prosthesis. 
Walking data included three inertial measurement units (IMUs) mounted via straps to the upper leg (Femur), 
lower leg (Shin), and foot. 
The IMUs acquired both the angle and angular velocity of each limb portion in the world coordinate frame at 100Hz. 
Ankle angle $\alpha_a$ was calculated as a post process from the foot and lower limb IMUs. 
We then trained the NN to generate the ankle angle from upper and lower limb IMU sensor values.
% More specifically, the NN model receives the angle and velocity from the upper and lower limb sensors 
% (network inputs $x^0$),
% $\alpha_{ul},\dot{\alpha}_{ul}, \alpha_{ll}, \dot{\alpha}_{ll}$, respectively,
% to predict the ankle angle $\alpha_a$ (network output $y$) which is, later, 
% used as the control parameter for a PD controller on the prosthetic. 
% See Fig.~\ref{fig:foot} for a visualization of the individual sensor readings. 
We used a sliding window of input variables, denoted as $dt$ ($dt=10$ in all our experiments), 
to account for the temporal influence on the control parameter and 
to accommodate for noise in the sensor readings. 
Therefore, the input to the network is $dt\times \lvert x^0\rvert$,  or more specifically the current and previous $dt$ sensor readings. 
% In all experiments, we trained a three-hidden-layer deep policy network with $32$ ReLU nodes at each hidden layer.
After the networks were fully trained we assessed the policy for constraint violations and collected samples for NNRepLayer.
\begin{wrapfigure}{r}{0.5\textwidth}
\vspace{-5pt}
% \vspace{-5pt}
    \centering \includegraphics[scale=0.48]{exec.pdf}
%   \vspace{-10pt}
  \caption{Real prosthesis walking test results for imposing the global constraint of $[-14,24]$ to the control (shown in red) and bounding the control rate by $2$ [deg/s] (shown in black). The color bar represents the normalized $L_2$-distance of each test input to its nearest neighbor in the repair set. }
  \vspace{-5pt}
\label{fig:exec_signal}
\end{wrapfigure}
We tested NNRepLayer on the last and the second to the last layer of network policy
to satisfy the constraints with a subset of the original training data including both adversarial and non-adversarial samples. 
In all experiments, we used $150$ samples in NNRepLayer and a held out set of size $2000$ for testing. 
Finally, the repaired policies to satisfy global and input-output constraints are tested on a prosthetic device for $10$ minutes of walking, see Fig. \ref{fig:exec_signal}. 
More specifically, the same healthy subject was fitted with an ankle bypass; a carbon fiber structure molded to the lower limb and constructed such that a prosthetic ankle can be attached to allow the able-bodied subject to walk on the prosthesis. 
The extra weight and off-axis positioning of the device incline the individual towards slower, asymmetrical gaits that generates strides out of the original training distribution \citep{cortino2022stair, gao2020recurrent}. 
The participant is then asked to walk again for $10$ minutes to assess whether constraints are satisfied.
Adversarial samples in the repair data set are hand-labeled for fine-tuning and retraining
so that the target outputs satisfy the given predicates.
In fine-tuning, as proposed in \citep{sinitsin2019editable,taormina2020performance}, we used the collected adversarial data set to train all the parameters of the original policy by gradient descent using a small learning rate ($10^{-4}$). To avoid over-fitting to the adversarial data set, we trained the weights of the top layer first, and thereafter fine-tuned the remaining layers for a few epochs.
The same hand-labeling strategy is applied in retraining, except that a new policy is trained from scratch for all original training samples.
In both methods, we trained the policy until all the adversarial samples in 
the repair data set satisfy the given predicates on the network's output. 
% \newpage
\subsection{Experiments and Results}
\paragraph{Global Constraint.} 
% \begin{figure*}[t]

%     \centering
%     \includegraphics[width=\textwidth]{global_constraint.pdf}

%     \caption{Global constraint: (a) ankle angle, $\alpha_{a}$, (b) the error between the predicted and the reference controls, (c) the violation degree vs. $L_2$-distance between the test and repair sample inputs.}
%     \label{fig:global_const}
% \end{figure*}
\begin{figure*}[t]

    \centering
    \includegraphics[width=0.95\textwidth]{global_constraint_supp.pdf}

    \caption{Global constraint: (a) ankle angle, $\alpha_{a}$, (b) the error between the predicted and the reference controls.}
    \label{fig:global_const}
\end{figure*}

The global constraint ensures that the prosthesis control, i.e., $\alpha_{a}$, stays within a certain range and never outputs an unexpected large value that disturbs the user's walking balance. 
Additionally, the prosthetic device we utilized in these scenarios contains a parallel compliant mechanism.
As such, either the human subject or the robotic controller could potentially drive the mechanism into the hard limits, potentially damaging the device.
In our walking tests, see Fig. \ref{fig:exec_signal}, we therefore specified global constraints such that the ankle angle stays within the bounds of $[-14,24]$ [deg] regardless of whether it is driven by the human or the robot. 
In simulation experiments. see Fig. \ref{fig:global_const} (a), we enforced artificially strict bounds on the ankle angle $\alpha_{a}$ 
to never exceed $\alpha_{a}=10$ [deg] which is a harder bound to satisfy.
% We defined the degree of violation as $0$ if 
% $\alpha_{a} \in [\alpha_a^{min}, \alpha_a^{max}]$, and $\min\{\lvert \alpha_{a}-\alpha_a^{max}\rvert , \lvert \alpha_{a} - \alpha_a^{min}\rvert\}$, otherwise.

% As shown in Fig. \ref{fig:global_const} (a), 
% the repaired network successfully satisfies
% the constraints in the original faulty regions
% while maintaining the tracking performance of the controller in the unconstrained regions. 

\paragraph{Conditional Constraint.} 
Depending on the ergonomic needs and medical history of a patient,
the attending orthopedic doctor or prosthetist may identify certain body configurations that are harmful, e.g., they may increase the risk of osteoarthritis or musculoskeletal conditions~\citep{morgenroth2012osteoarthritis, ekizos2018}. Following this rationale, we define a region $\mathcal{S}$ 
of joint angles space that should be avoided.
An example of such a region is demonstrated in Fig.~\ref{fig:flow_fig} as 
a grey box 
$\mathcal{S}=\{(\alpha_{ul}, \alpha_{a})~|~\alpha_{ul}\in [-2, -0.5], \alpha_{a}\in[1,3]\}$ 
in the joint space of ankle and femur angles. 
\begin{wrapfigure}{r}{0.355\textwidth}
\vspace{-5pt}
 \centering \includegraphics[scale=0.7]{flow_fig.pdf}
  \caption{Enforcing the conditional constraints to keep the joint femur-ankle angles out of the grey box.}
  \label{fig:flow_fig}
  \vspace{-10pt}
\end{wrapfigure}
To satisfy this constraint the control rate should be tuned such that the joint ankle and femur angles stay out of set $\mathcal{S}$. 
This constraint can be defined as an if-then-else proposition  $\alpha_{ul}\in[-2, -0.5] \implies \big(\alpha_a\in(-\infty,1]\big)\vee\big(\alpha_a\in[3,\infty)\big)$ 
which can be formulated as the disjunction of linear inequalities on the network's output. 
% For each given test input and its corresponding output $\alpha_a$, 
% the degree of violation is defined as the distance of $\alpha_a$ to the box if $\alpha_a$ is outside the box, and $0$
%  otherwise.
%  as $0$ if $\alpha_a$ is 
% inside the box and its distance to the box otherwise.
Figure \ref{fig:flow_fig} demonstrates the output of new policy after repairing with NNRepLayer. As it is shown, our method avoids the joint ankle and femur angles to enter the unsafe region $S$. 
% Figure \ref{fig:violate_dyn_cond} (b) also illustrates low output violation degree for the distant test input samples from the repair input set.
Finally, we observed that repairing the last layer does not result in a feasible solution.
% \paragraph{Comparison w\textbackslash Fine-tuning, Retraining.}
% In each experiment, we demonstrated the violation degree of our method compared with fine-tuning and retraining. Table \ref{tab:my-table} better illustrates the success of our method in satisfying the constraints while maintaining the control performance.
% \begin{wrapfigure}{r}{0.4\textwidth}
%  \centering \includegraphics[scale=0.7]{flow_fig.pdf}
%   \caption{NNRepLayer enforcing the conditional constraints to keep the joint femur-ankle angles out of the grey box.}
%   \label{fig:flow_fig}
%   \vspace{-30pt}
% \end{wrapfigure}
% As shown in Table \ref{tab:my-table}, retraining and NNRepLayer both perform well in maintaining the 
% minimum absolute error and the generalization of constraint satisfaction to the unseen testing samples 
% for global and input-output constraints. 
% However, according the Table \ref{tab:my-table} the satisfaction of if-then-else constraints is challenging for retraining and fine-tuning as the repair efficacy is dropped by almost $30\%$ using these techniques. It also highlights the power of our technique in generalizing the satisfaction of conditional constraints to the unseen cases.
\paragraph{Comparison w\textbackslash Fine-tuning, Retraining, and REASSURE.}
% In each experiment, we demonstrated the violation degree and the control signals of our method compared with fine-tuning, retraining \citep{sinitsin2020editable,taormina2020performance,ren2020few,DongEtAl2021qrs}, and REASSURE \citep{FuLi2022iclr}. 
% Comparing to \citep{FuLi2022iclr}, while REASSURE guarantees the satisfaction of constraints in the local repaired linear regions, 
% we showed that this method significantly reduces the performance of network in the repaired local regions, see Figures \ref{fig:global_const} and \ref{fig:dynamic_const_xxx}. 
% This method cannot address the input-output constraints given the faulty samples, and it introduces 500 times more faulty samples compared to our technique. 
% REASSURE cannot also accommodate the conditional constraints.
% Unlike REASSURE that guarantees the satisfaction of constraints for the samples in the same linear region as the repaired samples, 
% our technique only guarantees the satisfaction of constraints for the repaired samples. 
% While we empirically showed the  generalizability of our technique in a local neighborhood of the repaired samples, 
% our method does not theoretically guarantee the satisfaction of constraints for the unseen adversarial samples.
% We proposed a sound algorithm in the supplementary materials, 
% Sec. \ref{supp: soundness}, that  guarantees the safety for all other unseen samples. 
% We also showed that this method performs poorly in generalizing the repair for the input-output and conditional constraints.
Table \ref{tab:my-table} better illustrates the success of our method in satisfying the constraints while maintaining the control performance.
As shown in Table \ref{tab:my-table}, retraining and NNRepLayer both perform well in maintaining the 
minimum absolute error and the generalization of constraint satisfaction to the unseen testing samples 
for global and input-output constraints. 
However, the satisfaction of if-then-else constraints is challenging for retraining and fine-tuning as the repair efficacy is dropped by almost $30\%$ using these techniques. 
It also highlights the power of our technique in generalizing the satisfaction of conditional constraints to the unseen cases.

% \begin{table}[t]
% \centering
% \caption{The table reports the Mean Absolute Error (MAE) between the repaired and the original outputs, the percentage of adversarial samples that are repaired, also called the Repair Efficacy, and the runtime of NNRepLayer for each constraint case. The metrics for fine-tuning and retraining are the average of 50 runs. For fixed repair samples, NNRepLayer returns the same outcome each run.}
% \label{tab:my-table}

% \begin{adjustbox}{width=1\textwidth,center=\textwidth}
% \resizebox{\textwidth}{!}{%
% \setlength{\aboverulesep}{0pt}
% \setlength{\belowrulesep}{0pt}
% \setlength{\extrarowheight}{.75ex}
% \begin{tabular}{@{}cccc
% >{\columncolor[HTML]{EFEFEF}}c 
% >{\columncolor[HTML]{EFEFEF}}c 
% >{\columncolor[HTML]{C0C0C0}}c ccc@{}}
% \toprule
%  & \multicolumn{3}{c}{MAE}      & \multicolumn{3}{c}{\cellcolor[HTML]{EFEFEF} Repair Efficacy [\%]} & \multicolumn{3}{c}{Runtime {[}s{]}} \\ 
%  & Fine-tune & Retrain & NNRepLayer & Fine-tune            & Retrain           & NNRepLayer           & Fine-tune    & Retrain   & NNRepLayer   \\ \midrule
% Global       & $1.2\pm0.03$ & $1.4\pm0.08$           & $1.4\pm 0.11$   & $97\pm4$ & $98\pm3$ & $99\pm 1$ & $25\pm13$ & $127\pm30$ & $233 \pm 159$ \\
% Input-output & $0.6\pm0.03$          & $0.5\pm0.04$  & $0.5\pm 0.03$ & $88\pm2$ & $98\pm1$ & $98\pm1$ & $8\pm2$   & $101\pm1$  & $112 \pm 122$  \\
% Conditional  & $0.7\pm0.10$          & $0.31\pm0.03$ & $0.35 \pm 0.07$         & $72\pm5$ & $76\pm2$ & $93\pm 2$ & $18\pm3$  & $180\pm2$  & $480 \pm 110$ \\ \bottomrule 
% \end{tabular}%
% }
% \end{adjustbox}
% \end{table}

\begin{table*}[t]
\centering
\caption{The table reports: 
RT: runtime, 
MAE: Mean Absolute Error between the repaired and the original outputs, 
RE: the percentage of adversarial samples that are repaired (Repair Efficacy), 
and 
IB: the percentage of test samples that were originally safe but became faulty after the repair (Introduced Bugs). 
The metrics are the average of 50 runs. 
% Note that NNRepLayer and REASSURE \citep{FuLi2022iclr} return the same outcomes for the fixed repair samples at each run, 
% so the reported metrics are the average of repairing 50 different trained networks.
}
\label{tab:my-table}

\begin{adjustbox}{width=1\textwidth,center=\textwidth}
\setlength{\aboverulesep}{0pt}
\setlength{\belowrulesep}{0pt}
\setlength{\extrarowheight}{.75ex}
% \vspace{2cm}

\begin{tabular}{@{}
ccccccccc
@{}}
\toprule
 & \multicolumn{4}{c}{\cellcolor[HTML]{EFEFEF} NNRepLayer} 
 & \multicolumn{4}{c}{REASSURE~\citep{FuLi2022iclr}} \\ 
    &\cellcolor[HTML]{EFEFEF} RT [s] & \cellcolor[HTML]{EFEFEF} MAE & \cellcolor[HTML]{C0C0C0} RE [\%] & \cellcolor[HTML]{C0C0C0} IB [\%] 
    &RT [s]& MAE & RE [\%] & IB [\%]  \\ \midrule
Global  
        & \cellcolor[HTML]{EFEFEF} $233\pm159$ & \cellcolor[HTML]{EFEFEF} $1.4\pm 0.11$ & \cellcolor[HTML]{C0C0C0} $99\pm 1$ & \cellcolor[HTML]{C0C0C0} $0.09 \pm 0.20$
        & $14 \pm 1$ & $2.3\pm 0.78$ & $97\pm 1$ & $0$
        \\
Input-output 
        & \cellcolor[HTML]{EFEFEF} $112\pm122$ & \cellcolor[HTML]{EFEFEF} $0.5\pm 0.03$ & \cellcolor[HTML]{C0C0C0} $98\pm 1$ & \cellcolor[HTML]{C0C0C0} $0.19 \pm 0.18$
        & $30 \pm 8$ & $0.6\pm 0.03$ & $19\pm 4$ & $85\pm 5$
        \\
Conditional  
        & \cellcolor[HTML]{EFEFEF} $480\pm110$ & \cellcolor[HTML]{EFEFEF} $0.35\pm 0.07$ & \cellcolor[HTML]{C0C0C0} $93\pm 2$ & \cellcolor[HTML]{C0C0C0} $0.11 \pm 0.26$
        & Infeasible & Infeasible & Infeasible & Infeasible
        \\ 
        % \bottomrule \\
        \toprule
& \multicolumn{4}{c}{Fine-tune} 
 & \multicolumn{4}{c}{Retrain} \\ 

    &RT [s]& MAE & RE [\%] & IB [\%]  
    &RT [s]& MAE & RE [\%] & IB [\%]  \\ \midrule
Global  
        & $25\pm13$ & $1.2\pm 0.03$ & $97\pm 4$ & $0.95 \pm 0.45$
        & $127 \pm 30$ & $1.4\pm 0.08$ & $98\pm 3$ & $0.65 \pm 0.40$
        \\
Input-output 
        & $8\pm 2$ & $0.6\pm 0.03$ & $88\pm 2$ & $2.47 \pm 0.49$
        & $101 \pm 1$ & $0.5\pm 0.04$ & $98\pm 1$ & $0.28 \pm 0.32$
        \\
Conditional  
        & $18\pm 3$ & $0.7\pm 0.10$ & $72\pm 5$ & $0.27 \pm 0.25$
        & $180 \pm 2$ & $0.31\pm 0.03$ & $76\pm 2$ & $0.12 \pm 0.35$
        \\ \bottomrule 
\end{tabular}%
\end{adjustbox}
\end{table*}

\subsection{Testing Repair on Larger Networks}
\label{supp:256}
To demonstrate the scalability of our method, we conducted a repair experiment on a network with 256 neurons in each hidden layer. We used 1000  samples for repair and 2000 samples for testing. We formulate this problem in MIQP and run the program on a Gurobi \citep{gurobi} solver. We terminated the solver after 10 hours and report the best found feasible solution. Figure \ref{fig:net_256} and Table \ref{table:net_256} show the control signal, and the statistical results of this experiment, respectively. As demonstrated, our technique repaired a network with up to 256 nodes with 100\% repair efficacy in 10 hours.  Similar network structure and sizes are frequently used in robotics and controls tasks. Examples include \citep{fernandez2020deep} (3 hidden layer, 256 nodes), \citep{landgraf2021reinforcement} (2 hidden layer, 64 nodes), \citep{pinosky2022hybrid} (2 hidden layer, 200 nodes), \citep{zimmer2018developmental} (2 hidden layer, 50 nodes), and \citep{kristoffersen2021user} (2 hidden layer, 50 nodes).
\begin{table}[tbh]
\centering
\caption{ Experimental results for repairing a network with 256 nodes in each hidden layer for the input-output constraint repair, maximum ankle angle rate of $2$ [rad/s]. The table reports the size of network, the number of samples, the Mean Absolute Error (MAE) between the repaired and the original outputs, the percentage of adversarial samples that are repaired (Repair Efficacy), and the runtime.}
\label{table:net_256}
\begin{adjustbox}{width=0.9\textwidth,center=\textwidth}
\begin{tabular}{@{}cccccc}
\toprule
  & Network Size      & Number of Samples & MAE & Repair Efficacy [\%] & Runtime [h] \\ \midrule
  Input-output Constraint & 256 & 1000 & 0.62 & 100 & 10\\ \bottomrule 
\end{tabular}%
\end{adjustbox}
\end{table}
%===============================================================================
\begin{figure*}[tbh]

    \centering
    \includegraphics[width=\textwidth]{net_256.pdf}

    \caption{Input-output constraints for networks with 32 (red) and 256 (black) nodes in each hidden layer: Ankle angles and Ankle angle rates for bounds $\Delta \alpha_a = 2$.}
    \label{fig:net_256}
\end{figure*}

\subsection{Heuristics for Computational Speedups of NNRepLayer}
\label{supp: subnode}
In this section we provide examples for computational speedups and heuristics that lead to faster neural network repair. 
To this end, we repaired randomly selected nodes of a single hidden layer in a network with 64 hidden nodes for 35 times. 
We let the solver run for 30 minutes in each experiment (versus the full repair that is solved in 6 hours). 
Figure \ref{fig:random_nodes} demonstrates the mean absolute error (MAE), the total number of repaired weights, repair efficacy, and the original MIQP cost. 
Here, to detect the sparse nodes that can satisfy the constraints, we solved the original full repair by adding the $l_{1}$ norm-bounded error of repaired weights with respect to their original values to the MIQP cost function. 
The bold bars in Fig. \ref{fig:random_nodes} demonstrate the results of repairing the $10$ randomly-selected sparse nodes. 
Repairing of the obtained sparse nodes reached a cost value very close to the cost value of the originally full repair problem ($42.99$ versus $40.29$), \textbf{in only 30 minutes} versus 6 hours. 
As illustrated, repair of some random nodes also results in infeasibility (blank bars) that shows these nodes cannot repair the network. 
In our future work, we aim to explore the techniques such as neural network pruning \citep{hassibi1993optimal, lecun1989optimal}
to select and repair just the layers and nodes that can satisfy the constraints instead of repairing a full layer. 
Our results in this experiment show that repairing partial nodes can significantly decrease the computational time of our technique.
\begin{figure*}[thb]

    \centering
    \includegraphics[width=\textwidth]{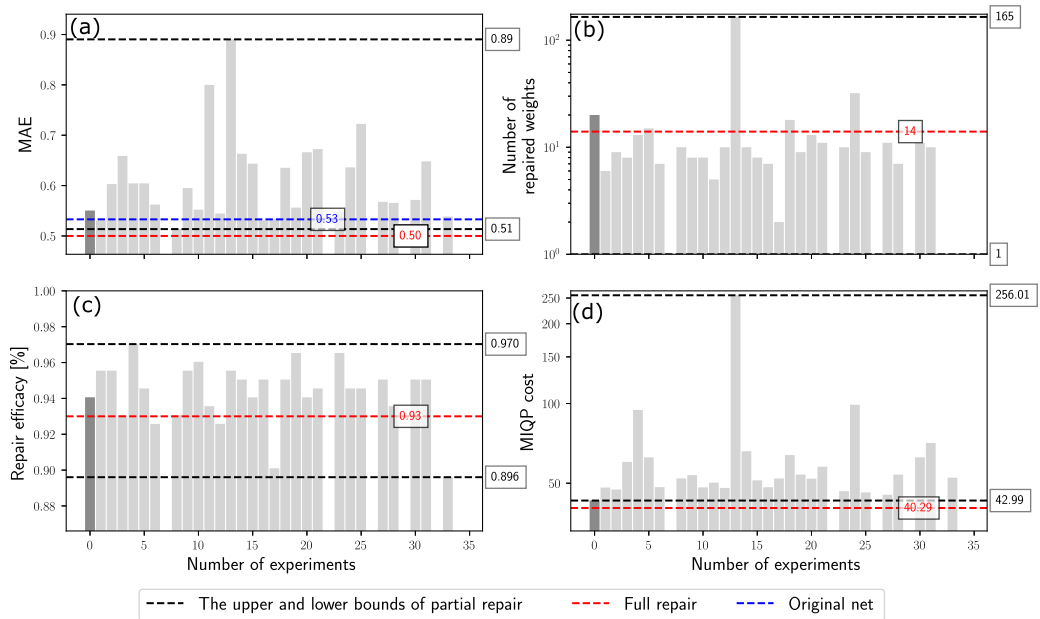}

    \caption{Partial repair versus full repair: we repaired 10 randomly selected nodes for 30 minutes in a network with 64nodes in each hidden layer (a) mean absolute error (MAE), (b) the total number of repaired weights, (c) repair efficacy, and (d) MIQP cost. We performed this random selections for 35 times.}
    \label{fig:random_nodes}
\end{figure*}

\bibliographystyle{unsrtnat}
\bibliography{references}  % .bib
% \bibliography{main}